\documentclass{article}

\PassOptionsToPackage{numbers, compress}{natbib}

\usepackage[preprint]{neurips_2026}

\usepackage[utf8]{inputenc} 
\usepackage[T1]{fontenc}    
\usepackage{hyperref}       
\usepackage{url}            
\usepackage{booktabs}       
\usepackage{amsfonts}       
\usepackage{nicefrac}       
\usepackage{microtype}      
\usepackage{xcolor}         
\usepackage{amsmath}
\usepackage{algorithm}
\usepackage{algpseudocode}
\usepackage{graphicx} 
\usepackage{amsthm}
\usepackage{amssymb}
\usepackage{subcaption}
\usepackage{caption}

\usepackage{placeins}
\usepackage{wrapfig}

\theoremstyle{plain}

\newtheorem{lemma}{Lemma}
\newtheorem*{assumption*}{Assumption}
\newtheorem*{theorem*}{Theorem}

\theoremstyle{definition}

\theoremstyle{remark}
\newtheorem{remark}{Remark}

\title{RoSHAP: A Distributional Framework and Robust Metric for Stable Feature Attribution\thanks{
Code is available at \url{https://github.com/Lanxin-Xiang/RobustSHAP}.}}

\author{%
  Lanxin Xiang \\
  Department of Statistics \\
  Virginia Tech \\
  \texttt{francesx@vt.edu} \\
  \And
  Liang Shi \\
  Department of Statistics \\
  Virginia Tech Transportation Institute \\
  Virginia Tech \\
  \texttt{sliang@vt.edu} \\
  \And
  Youhui Ye \\
  Department of Statistics \\
  Virginia Tech \\
  \texttt{yye1997@vt.edu} \\
  \And
  Boyu Jiang \\
  Department of Statistics \\
  Virginia Tech \\
  \texttt{boyuj@vt.edu} \\
  \And  
  Dawei Zhou \\
  Department of Computer Science \\
  Virginia Tech \\
  \texttt{zhoud@vt.edu} \\
  \And  
  Feng Guo\thanks{Corresponding author.} \\
  Department of Statistics \\
  Virginia Tech Transportation Institute \\
  Virginia Tech \\
  \texttt{feng.guo@vt.edu}
}

\begin{document}

\maketitle

\begin{abstract}
Feature attribution analysis is critical for interpreting machine learning models and supporting reliable data-driven decisions. 
However, feature attribution measures often exhibit stochastic variation: different train--test splits, random seeds, or model-fitting procedures can produce substantially different attribution values and feature rankings. 
This paper proposes a framework for incorporating stochastic nature of feature attribution and a robust attribution metric, \textbf{RoSHAP}, for stable feature ranking based on the SHAP metric. 
The proposed framework models the distribution of feature attribution scores and estimates it through bootstrap resampling and kernel density estimation.
We show that, under mild regularity conditions, the aggregated feature attribution score is asymptotically Gaussian, which greatly reduces the computational cost of distribution estimation.
The RoSHAP summarizes the distribution of SHAP into a robust feature-ranking criterion that simultaneously rewards features that are active, strong, and stable.
Through simulations and real-data experiments, the proposed framework and RoSHAP outperform standard single-run attribution measures in identifying signal features. 
In addition, models built using RoSHAP-selected features achieve predictive performance comparable to full-feature models while using substantially fewer predictors.  The proposed RoSHAP approach improves the stability and interpretability of machine learning models, enabling reliable and consistent insights for analysis.
\end{abstract}

\section{Introduction}
\label{sec:intro}
Feature attribution provides a quantitative basis for interpreting machine learning (ML) models by assigning contribution scores to individual predictors. 
For many modern ML models, however, these scores cannot be directly derived from the model structure and must instead be estimated using post hoc attribution methods.  
This task becomes especially challenging when the data are high-dimensional, noisy, and correlated, or when the predictor--response relationship is complex. 
In such settings, attribution results may be difficult to interpret and may not provide a stable basis for scientific or practical conclusions.

A large body of work has developed feature attribution methods for explaining model predictions, including SHapley Additive exPlanations (SHAP) \citep{lundberg2017unified}, Local Interpretable Model-agnostic Explanations (LIME) \citep{ribeiro2016should}, DeepLIFT \citep{shrikumar2017learning}, and information-gain-based measures. 
These metrics capture different aspects of feature contribution, each with distinct theoretical properties and practical trade-offs. 
SHAP, for instance, produces additive instance-level attribution scores grounded in cooperative game theory and is the unique additive attribution method satisfying local accuracy, missingness, and consistency \citep{lundberg2017unified}.
These guarantees have made SHAP a popular choice across diverse domains, including medicine \citep{bhattarai2024explainable, ponce2024practical, vimbi2024interpreting, liu2022diagnosis}, natural language processing \citep{mosca2022shap}, and spatial data analysis \citep{li2022extracting}.

While existing feature attribution methods are valuable, challenges remain in how attribution scores are estimated and interpreted.  
A standard workflow splits the data into training and test sets, fits a model, and computes attribution values from the trained model.
Each step can introduce stochastic variation, including random data partitioning, random initialization, and stochastic optimization. 
Consequently, attribution scores are not fixed quantities; different train--test splits or random seeds can yield substantially different feature rankings \citep{goldwasser2024statistical}. 
This uncertainty is especially pronounced in high-dimensional settings such as genomics, where the number of features far exceeds the number of samples.
Figure~\ref{fig:golub_seed_shap} illustrates this issue: the top-three SHAP-ranked genes for cancer classification vary across train--test splits, and their attribution magnitudes also differ substantially. Such randomness provides an unreliable basis for costly downstream gene testing.

\begin{figure}[ht]
    \centering
    \begin{subfigure}{0.32\linewidth}
        \centering
        \includegraphics[width=\linewidth]{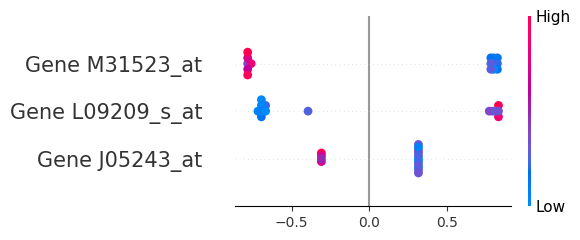}
        \caption{Round 1}
        \label{fig:shap_demo_seed666}
    \end{subfigure}
    \hfill
    \begin{subfigure}{0.32\linewidth}
        \centering
        \includegraphics[width=\linewidth]{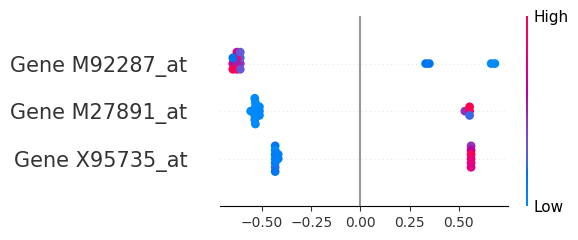}
        \caption{Round 2}
        \label{fig:shap_demo_seed42}
    \end{subfigure}
    \hfill
    \begin{subfigure}{0.32\linewidth}
        \centering
        \includegraphics[width=\linewidth]{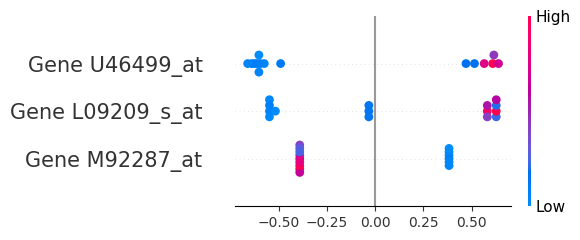}
        \caption{Round 3}
        \label{fig:shap_demo_seed33}
    \end{subfigure}
    \caption{Top SHAP-ranked features in the Golub data across different train--test sets. Although all three runs achieve perfect macro F1, the selected top features differ substantially.}
    \label{fig:golub_seed_shap}
\end{figure}

We propose a bootstrap-based framework for estimating the distribution of feature attribution scores. 
Based on this framework, we introduce \textbf{RoSHAP} (Robust SHAP), a robust attribution metric for stable feature ranking. 
The framework repeatedly resamples the data and refits the model to capture variability in feature attribution scores. 
The resulting attribution scores are summarized through kernel density estimation, yielding an estimated distribution that reflects both attribution magnitude and variability. 
We further derive a Gaussian approximation for the aggregated feature-level attribution score under a mild regularity condition, substantially reducing the computational cost of distribution estimation. 

RoSHAP summarizes the estimated attribution distribution through three components: activity, strength, and stability. 
Activity measures the proportion of nonzero attribution values, strength measures the median magnitude of the nonzero attribution values, and stability measures the signal-to-noise ratio of the attribution distribution. 
By rewarding features that contribute frequently, have large attribution magnitudes, and remain stable across resampling runs, RoSHAP provides a robust basis for feature ranking and interpretation.


Results on both simulated and real-world datasets show that RoSHAP yields stable interpretations across regression and classification tasks and across diverse data modalities.
RoSHAP effectively identifies the most influential features, and models trained using only these selected features achieve performance comparable to that of models trained on the full feature set.

The remainder of this paper is organized as follows. Section \ref{sec:literature} reviews related work on SHAP and other feature contribution methods. Section \ref{sec:simulation} reports simulation studies under different settings. Section \ref{sec:example} presents three motivating examples and performance comparisons.  Section \ref{sec:discussion} concludes with a discussion of the main findings and future directions.

\section{Related Works}
\label{sec:literature}
\paragraph{Feature attribution and additive attribution.}
Feature attribution is widely used to interpret predictive models.
For classical linear models, \(p\)-values, \(t\)-tests, and \(F\)-tests provide feature-level inference under parametric assumptions.
For ML models, common importance measures include gain-, impurity-, and entropy-based scores, as well as post hoc methods such as SHAP \citep{lundberg2017unified}, LIME \citep{ribeiro2016should}, and DeepLIFT \citep{shrikumar2017learning}.
SHAP is grounded in cooperative game theory, treating predictors as players and the model prediction as the payoff; each feature is assigned its average marginal contribution over all coalitions.
This additive formulation makes SHAP broadly applicable across model classes.
TreeSHAP \citep{lundberg2020local} enables efficient computation for tree ensembles such as random forests, XGBoost \citep{chen2016xgboost}, and LightGBM \citep{ke2017lightgbm}, while KernelSHAP and DeepSHAP extend SHAP to general black-box and deep learning models.

\paragraph{SHAP stability and prediction uncertainty.}
Growing literature examines how feature attributions vary under data and model randomness.
\citet{shaikhina2021effects} showed that predictive uncertainty can propagate to explanation uncertainty, producing unstable attribution scores, especially for out-of-distribution samples.
Other studies use repeated resampling or nested cross-validation to summarize SHAP values across folds through averages, selection frequencies, or rank distributions \citep{scheda2022explanations, halabi2026systematic, goldwasser2024statistical}.
Related work explains predictive uncertainty: \citet{watson2023explaining} extended SHAP with information-theoretic value functions to attribute epistemic and aleatoric uncertainty.
However, most approaches focus on averages or rankings rather than the full distributional structure of attribution scores.

\paragraph{SHAP for feature selection.}
SHAP is also used for feature selection, typically by ranking variables by mean absolute SHAP values and retaining the top-ranked subset. 
This strategy has been adopted across application domains \citep{marcilio2020explanations, gebreyesus2023machine, shen2025data}. 
Comparative studies have evaluated SHAP-based feature selection against built-in model importance measures and other screening rules \citep{wang2024feature}. 
While often effective, these approaches remain largely magnitude-based, with limited attention to resampling instability, exact zeros, and strong predictor correlation.

Overall, prior work has established SHAP as a useful tool for interpretation, uncertainty analysis, and feature selection. 
However, existing approaches mainly focus on average attribution magnitude, ranking stability, or predictive uncertainty, rather than the full empirical distribution of attribution scores across samples and resamples. 
This gap motivates the framework proposed in this paper.

\section{Methodology}
\label{sec:method}


To account for the stochastic nature of feature attribution, we adopt a distributional view of feature attributions and introduce a new metric called RoSHAP. Since most additive attribution methods result from complex calculation procedure, the underlying distribution is not available in closed-form. 
We propose to use bootstrapping to approximate these distributions, and then estimate the resulting empirical distribution via kernel density estimation (KDE).
Under mild regularity conditions, the attribution distribution can be approximated by a Gaussian distribution, reducing the computational burden associated with bootstrapping.
RoSHAP then quantifies feature attribution by jointly accounting for inactivity, signal, and noise level. The overall pipeline is illustrated in Figure~\ref{fig:flowchart}.
\vspace{-1em}
\begin{figure}[ht]
    \centering    \includegraphics[width=0.67\linewidth]{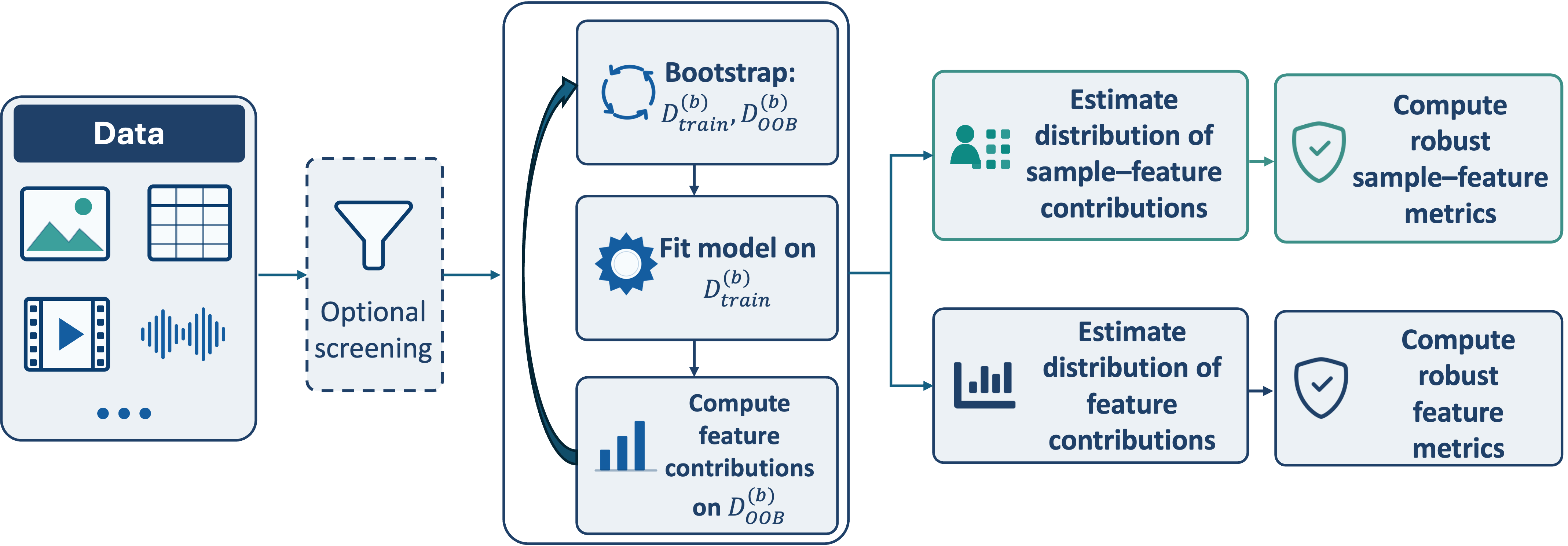}
    \caption{Overall framework for distributional feature attribution estimation.}
    \label{fig:flowchart}
\end{figure}

\clearpage
\begin{wrapfigure}[12]{r}{0.5\textwidth}
\begin{minipage}{0.5\textwidth}
\footnotesize

\hrule
\vspace{0.3em}
\refstepcounter{algorithm}
\noindent\textbf{Algorithm~\thealgorithm.} Bootstrap-based SHAP distribution estimation
\label{alg:data}
\vspace{0.3em}
\hrule
\vspace{0.5em}

\begin{algorithmic}[1]
\Require Data \(\mathcal D\); bootstrap runs \(B\)
\Ensure Sample- and feature-level SHAP distributions
\State Preprocess \(\mathcal D\); optionally screen predictors
\For{$b=1,\ldots,B$}
    \State Draw \(\mathcal D^{(b)}_{\mathrm{train}}\) from \(\mathcal D\)
    \State Define the out-of-bag set \(\mathcal D^{(b)}_{\mathrm{oob}}=\mathcal D\setminus \mathcal D^{(b)}_{\mathrm{train}}\)
    \State Fit model \(\hat f^{(b)}\) on \(\mathcal D^{(b)}_{\mathrm{train}}\)
    \State Compute OOB SHAP \(T^{(b)}_{ij}\)
\EndFor
\State Summarize \(U_{j}^{(b)}\) by \(\sum_i |T_{ij}|\)
\State Estimate distribution for \(T_{ij}\) and \(U_{j}\)
\end{algorithmic}
\vspace{0.3em}
\hrule
\end{minipage}
\vspace{-1em}
\end{wrapfigure}

\paragraph{Bootstrap-based Estimation of Feature Attributions.}
Let \(\mathcal{D}_n=\{(x^{(i)},y_i): i=1,\dots,n\}\) denote a labeled dataset with \(n\) observations and \(p\) features, where \(x^{(i)}=(x_{i1},\dots,x_{ip})^\top \in \mathbb{R}^p\) and \(y_i\) is the response. 
Let \(\hat f(\cdot)\) denote the fitted model, with prediction \(\hat f(x^{(i)})\) for observation \(i\). 
For each observation \(i\) and feature \(j\), denote by \(T_{ij}\) the contribution of feature \(j\) at \(x^{(i)}\).

We estimate the distribution of attribution values by bootstrapping. 
By repeatedly resampling the training data, refitting the model, and recomputing attributions, we obtain multiple realizations of \(T_{ij}\), which serve as samples from the underlying distribution.
Algorithm~\ref{alg:data} summarizes the bootstrap-based distribution estimation procedure.

\paragraph{Zero-Inflated Modeling and Gaussian Approximation of Feature-level Attributions.}

Our framework applies to any additive attribution method, and we use SHAP as a representative example throughout. In practice, SHAP values often contain exact zeros, especially in tree-based models, where features absent from a prediction path receive no attribution \citep{lundberg2017unified,lundberg2020local}.
To account for this behavior, we model the sample-feature SHAP value \(T_{ij}\) as a zero-inflated mixture,
\[
T_{ij} \sim w_{ij}\,\delta_0 + (1-w_{ij})\,G_{ij},
\]
where \(\delta_0\) is a point mass at zero, \(w_{ij}=\Pr(T_{ij}=0)\), and \(G_{ij}\) denotes the nonzero component of \(T_{ij}\).


We define the feature-level attribution as the aggregated magnitude
\[
U_j=\sum_{i=1}^n |T_{ij}| = \sum_{i=1}^n (1-B_{ij})H_{ij}.
\]
For asymptotic analysis, it is convenient to decompose
\[
|T_{ij}|=(1-B_{ij})H_{ij},
\qquad
B_{ij}\sim \mathrm{Bernoulli}(w_{ij}),
\]
where \(B_{ij}=1\) indicates \(T_{ij}=0\), and \(H_{ij}=|G_{ij}|\) is the nonzero magnitude.
This representation separates sparsity from magnitude while preserving the distribution of \(|T_{ij}|\).
We estimate the distribution of \(U_j\) nonparametrically using KDE. 


When mild regularity conditions hold, the distribution of \(U_j\) can be well approximated by a Gaussian distribution and can therefore be summarized by its mean and standard deviation. This approximation is justified through a Lyapunov central limit theorem argument.


The mean and variance of \(U_j\) are given by
\[
\begin{gathered}
\mu_j := E(U_j)=\sum_{i=1}^n (1-w_{ij})E(H_{ij}),\\[-0.4em]
s_j^2 := \operatorname{Var}(U_j)
=\sum_{i=1}^n (1-w_{ij})\operatorname{Var}(H_{ij})
+\sum_{i=1}^n w_{ij}(1-w_{ij})\{E(H_{ij})\}^2 .
\end{gathered}
\]
\vspace{-0.8em}
\begin{assumption*}
For a fixed feature \(j\), the pairs \((B_{ij},H_{ij})\), \(i=1,\dots,n\), are mutually independent, with \(B_{ij}\perp H_{ij}\) for each \(i\). In addition, for some \(\delta>0\), the Lyapunov condition holds:
\[
\frac{1}{s_j^{2+\delta}}
\sum_{i=1}^n
E\left|
|T_{ij}|-E[|T_{ij}|]
\right|^{2+\delta}
\to 0 .
\]
\end{assumption*}
This assumption ensures that the aggregate contribution \(U_j\) is governed by many small terms rather than a few dominant ones, which is crucial for the Gaussian approximation.
\begin{theorem*}
Under the above assumption,
\[
\frac{U_j-\mu_j}{s_j}\xrightarrow{d}N(0,1).
\]
\end{theorem*}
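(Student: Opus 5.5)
The plan is to apply the Lyapunov central limit theorem for triangular arrays directly to the summands \(X_{ij} := |T_{ij}| - E|T_{ij}|\), \(i=1,\dots,n\), for the fixed feature \(j\).

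\textbf{Step 1: independence.} Write \(|T_{ij}| = (1-B_{ij})H_{ij}\). Each \(|T_{ij}|\) is a measurable function of the pair \((B_{ij},H_{ij})\). Since the assumption makes these pairs mutually independent across \(i\), the \(X_{ij}\) are independent and centered. Their sum is exactly \(U_j - \mu_j\).

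\textbf{Step 2: moments.} Using \(B_{ij}\perp H_{ij}\) and \(E(1-B_{ij}) = 1-w_{ij}\), we get \(E|T_{ij}| = (1-w_{ij})E(H_{ij})\). Summing over \(i\) gives \(\mu_j\).

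For the variance, note that \((1-B_{ij})^2 = 1-B_{ij}\). Independence then gives
\[
E|T_{ij}|^2 = (1-w_{ij})\,E(H_{ij}^2).
\]
Subtracting \((1-w_{ij})^2\{E(H_{ij})\}^2\) yields
\[
\operatorname{Var}|T_{ij}| = (1-w_{ij})\operatorname{Var}(H_{ij}) + w_{ij}(1-w_{ij})\{E(H_{ij})\}^2 .
\]
By independence across \(i\), the variance of the sum is the sum of these terms, which is \(s_j^2\). This confirms that the normalization in the statement is the true standard deviation of \(U_j\). Implicitly we need \(s_j^2>0\) and the moments to be finite, which the Lyapunov condition presupposes.

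\textbf{Step 3: Lyapunov's theorem.} The stated assumption is precisely Lyapunov's condition for the row \(\{X_{ij}\}_{i\le n}\), with normalizer \(s_j = s_{j,n}\). Lyapunov's theorem therefore gives \(\sum_i X_{ij}/s_j \xrightarrow{d} N(0,1)\).

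If a self-contained argument is wanted instead of citing the theorem, I would verify the Lindeberg condition. For any \(\varepsilon>0\),
\[
\frac{1}{s_j^2}\sum_i E\bigl[X_{ij}^2\mathbf 1\{|X_{ij}|>\varepsilon s_j\}\bigr] \le \frac{1}{\varepsilon^\delta s_j^{2+\delta}}\sum_i E|X_{ij}|^{2+\delta} \to 0 .
\]
Lindeberg--Feller then gives the conclusion.

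\textbf{Main obstacle.} The only delicate point is interpretive rather than technical. The quantities \(w_{ij}\), the law of \(H_{ij}\), and hence \(s_j\) depend on \(n\), so the statement must be read as a triangular-array limit as \(n\to\infty\). Independence is required only within each row. I would state this explicitly, along with finiteness of the \((2+\delta)\)-th moments.

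Everything else reduces to the moment computation in Step 2. In that step the independence \(B_{ij}\perp H_{ij}\) is used to factor the expectations, and the idempotence \((1-B_{ij})^2 = 1-B_{ij}\) removes the Bernoulli factor from the second moment.
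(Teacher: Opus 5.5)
Your proposal is correct and follows essentially the same route as the paper: you compute \(E|T_{ij}|\) and \(\operatorname{Var}|T_{ij}|\) using \(B_{ij}\perp H_{ij}\), use independence across \(i\) to identify \(s_j^2\) as \(\operatorname{Var}(U_j)\), and then apply Lyapunov's CLT to the centered summands \(|T_{ij}|-E|T_{ij}|\). The differences are cosmetic. You obtain the variance from the second moment via \((1-B_{ij})^2=1-B_{ij}\) rather than from the law of total variance, and you add an optional Lindeberg check and an explicit triangular-array reading that the paper leaves implicit.
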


This result justifies approximating the distribution of \(U_j\) by a Gaussian when $n \to \infty$. In practice, it implies that for features with sufficiently stable aggregated contributions, uncertainty can be effectively summarized using only the mean and variance, substantially reducing the number of bootstrap resamples. A proof is provided in Appendix~\ref{app:proof_clt}.
When the regularity conditions are approximately satisfied, we summarize \(U_j\) using its empirical median and standard deviation.

\paragraph{Robust metric.}
Based on the estimated feature-level attribution distribution, we define several complementary metrics for feature evaluation.
For feature \(j\), define
\[
P_{0j} := \Pr(U_j=0), \qquad
m_j := \operatorname{Median}(U_j\mid U_j>0), \qquad
s_j := \operatorname{SD}(U_j).
\]
The robust feature attribution score, \textbf{RoSHAP}, is defined as
\[
\mathrm{RoSHAP}_j := (1-P_{0j})\frac{m_j^2}{s_j}.
\]
Here, \(P_{0j}\) measures inactivity, while \(\mathrm{RoSHAP}_j\) rewards features that are active, strong, and stable.

\section{Simulation}
\label{sec:simulation}
This simulation study demonstrates that the proposed method provides more robust and interpretable feature contribution estimates. From a distributional perspective, the method helps identify stable signals and distinguish them from noise features selected by chance. 
The results verify that the feature contribution distributions are approximately Gaussian, allowing fewer bootstrap repetitions and substantially reducing computational cost.

\paragraph{Simulation setup.}
A binary classification setting was considered with \(n\) observations and \(d\) predictors, where the first \(s\) predictors were true signals and the remaining \(d-s\) predictors were noise, with \(s<d\). To mimic settings in which a feature is informative at the population level but may contribute little for some individual observations, predictors were generated from zero-inflated Gaussian distributions.

For each signal feature \(j=1,\ldots,s\),
\[
X_j \mid Y=y \sim \pi_{\text{signal}}\delta_0
+
(1-\pi_{\text{signal}})N(\mu_{jy}, \sigma_{\text{signal}}^2),
\]
where \(Y\in\{0,1\}\), \(P(Y=1)=0.5\), and \(\mu_{j1}=-\mu_{j0}\). Thus, signal features differed between the two classes through their nonzero component means.

For noise features \(j=s+1,\ldots,d\), the class-conditional distributions had the same zero-inflated Gaussian form but with zero mean in both classes:
\[
X_j \mid Y=y \sim \pi_{\text{noise}}\delta_0
+
(1-\pi_{\text{noise}})N(0,\sigma_{\text{noise}}^2).
\]
Therefore, noise features had no class-dependent mean shift. Under this design, signal features may be locally inactive when they take the value zero, while remaining globally informative through their class-dependent nonzero means.

In this experiment, \(n\), \(d\), and \(s\) were set to 600, 1000, and 10, respectively. For signal features, \(\sigma_{\text{signal}}=3\) and \(\pi_{\text{signal}}=0.3\). For noise features, \(\sigma_{\text{noise}}=1\) and \(\pi_{\text{noise}}=0.2\). The signal strengths were defined by \(\mu_{j1}=-\mu_{j0}\), with \(\mu_{j1}\) evenly spaced from 1.5 to 0.4 across the ten signal features. An XGBoost classifier was then fitted using the same model parameters across all comparison settings.

\paragraph{Results.}
Figure~\ref{fig:simu_importance_dist} shows the empirical distributions of the importance estimates over 1000 bootstrap runs. Among the selected predictors, the distributions for \(x_1\) and \(x_3\) are approximately Gaussian, while the distribution for \(x_7\) is slightly right-skewed. Overall, the nonzero portions of the distributions are close to Gaussian, supporting the use of Gaussian-based uncertainty summaries with fewer bootstrap repetitions.

\begin{figure}[h]
    \centering
    \begin{subfigure}{0.32\linewidth}
        \centering
        \includegraphics[width=\linewidth]{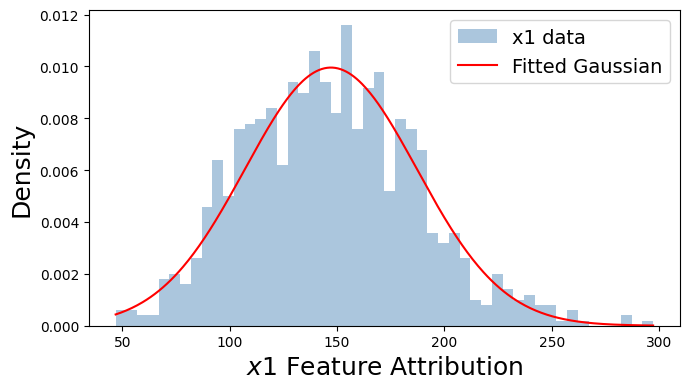}
        \caption{\(x_1\)}
        \label{fig:simu_x1_dist}
    \end{subfigure}
    \hfill
    \begin{subfigure}{0.32\linewidth}
        \centering
        \includegraphics[width=\linewidth]{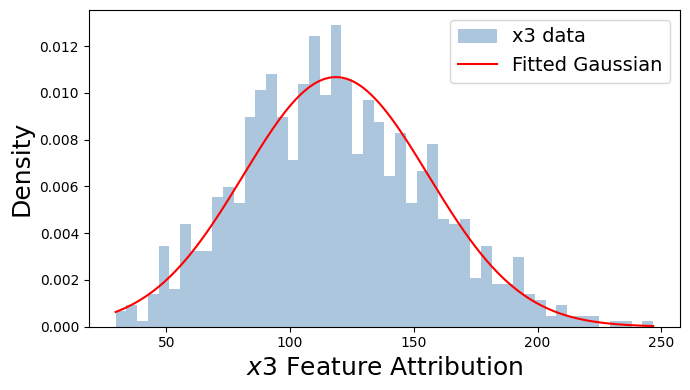}
        \caption{\(x_3\)}
        \label{fig:simu_x3_dist}
    \end{subfigure}
    \hfill
    \begin{subfigure}{0.32\linewidth}
        \centering
        \includegraphics[width=\linewidth]{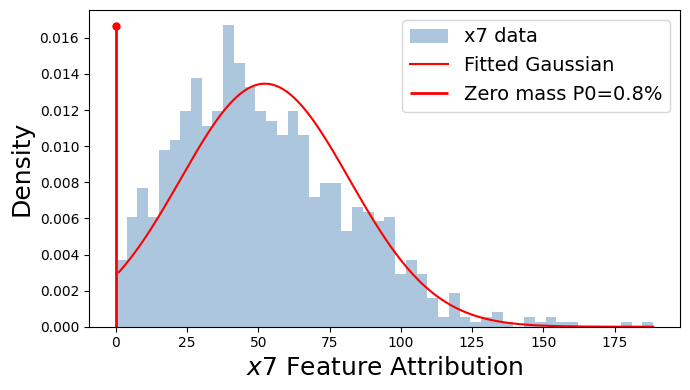}
        \caption{\(x_7\)}
        \label{fig:simu_x7_dist}
    \end{subfigure}

    \caption{Empirical distributions of robust importance estimates over 1000 bootstrap runs with fitted Gaussian distributions for predictors \(x_1\), \(x_3\), and \(x_7\).}
    \label{fig:simu_importance_dist}
    \vspace{-0.5em}
\end{figure}


Three settings were compared: 1000-run bootstrap, 10-run bootstrap, and standard single train--test split SHAP.
As shown in Table~\ref{tab:ranking_comparison}, the 1000-run bootstrap provides the most reliable ranking: eight signal features are recovered within the top 12, and the lowest-ranked signal feature, \(x_9\), is placed 85th among 1000 features based on the robust importance metric. The 10-run bootstrap also performs well, recovering eight signal features among the top 12, although the lowest-ranked signal feature drops to 317th among 1000 features.
The zero percentage values \(P_0\) provide additional insight into stability. Strong signal features have near-zero \(P_0\), indicating consistent contribution across bootstrap runs. In contrast, weaker or less stable features, such as \(x_8\) and \(x_9\), exhibit larger \(P_0\), suggesting more frequent zero contributions.

\begin{table}[ht]
\centering
\tiny
\setlength{\tabcolsep}{1.5pt}
\renewcommand{\arraystretch}{0.85}
\caption{Comparison of feature rankings from bootstrap-based robust importance, single-round SHAP, information gain, XGBoost gain, and LIME screening metrics.}
\label{tab:ranking_comparison}
\resizebox{\textwidth}{!}{%
\begin{tabular}{ccrr|ccrr|ccr|ccr|ccr|ccr}
\hline
\multicolumn{4}{c|}{1000 Runs} &
\multicolumn{4}{c|}{10 Runs} &
\multicolumn{3}{c|}{XGBoost SHAP} &
\multicolumn{3}{c|}{Information Gain} &
\multicolumn{3}{c|}{XGBoost Gain} &
\multicolumn{3}{c}{LIME} \\
\hline
Rank & ID & RoSHAP & \(P_0\) &
Rank & ID & RoSHAP & \(P_0\)&
Rank & ID & Score &
Rank & ID & Score &
Rank & ID & Score &
Rank & ID & Score \\
\hline
1  & \(x_1\)    & 512.66 & 0.00  &
1  & \(x_4\)    & 779.54 & 0.00  &
1  & \(x_2\)    & 35.24  &
1  & \(x_4\)    & 0.090  &
5  & \(x_1\)    & 1.826  &
1  & \(x_3\)    & 0.077  \\

2  & \(x_3\)    & 347.21 & 0.00  &
2  & \(x_1\)    & 767.57 & 0.00  &
2  & \(x_6\)    & 25.86  &
48 & \(x_2\)    & 0.043  &
6  & \(x_2\)    & 1.782  &
2  & \(x_1\)    & 0.075  \\

3  & \(x_4\)    & 339.45 & 0.00  &
3  & \(x_2\)    & 433.42 & 0.00  &
3  & \(x_4\)    & 15.38  &
187 & \(x_5\)   & 0.024  &
10 & \(x_3\)    & 1.696  &
3  & \(x_2\)    & 0.055  \\

4  & \(x_2\)    & 270.75 & 0.00  &
4  & \(x_3\)    & 430.41 & 0.00  &
4  & \(x_3\)    & 15.14  &
235 & \(x_6\)   & 0.021  &
13 & \(x_4\)    & 1.450  &
4  & \(x_4\)    & 0.036  \\

5  & \(x_6\)    & 120.99 & 0.20  &
5  & \(x_6\)    & 271.18 & 0.00  &
12 & \(x_5\)    & 4.12   &
259 & \(x_1\)   & 0.019  &
26 & \(x_6\)    & 1.069  &
5  & \(x_6\)    & 0.035  \\

6  & \(x_5\)    & 80.65  & 0.40  &
6  & \(x_5\)    & 104.68 & 0.00  &
31 & \(x_{10}\) & 0.14   &
281 & \(x_3\)   & 0.017  &
41 & \(x_7\)    & 0.887  &
7  & \(x_7\)    & 0.008  \\

9  & \(x_7\)    & 14.73  & 7.20  &
7  & \(x_7\)    & 60.94  & 0.00  &
333 & \(x_7\)   & 0.00   &
292 & \(x_7\)   & 0.015  &
74 & \(x_{10}\) & 0.686  &
15 & \(x_{10}\) & 0.004  \\

12 & \(x_{10}\) & 10.65  & 11.00 &
10 & \(x_{10}\) & 25.85  & 0.00  &
387 & \(x_8\)   & 0.00   &
329 & \(x_9\)   & 0.012  &
131 & \(x_5\)   & 0.555  &
58 & \(x_9\)    & 0.000  \\

53 & \(x_8\)    & 1.92   & 27.60 &
95 & \(x_9\)    & 2.64   & 30.00 &
669 & \(x_1\)   & 0.00   &
658 & \(x_{10}\)& 0.000  &
159 & \(x_8\)   & 0.494  &
960 & \(x_8\)   & 0.000  \\

85 & \(x_9\)    & 1.08   & 37.00 &
317 & \(x_8\)   & 0.32   & 40.00 &
670 & \(x_9\)   & 0.00   &
744 & \(x_8\)   & 0.000  &
707 & \(x_9\)   & 0.063  &
999 & \(x_5\)   & 0.000  \\
\hline
\end{tabular}%
}
\end{table}

The other metric are substantially less stable. Only four signal features are recovered within the top 12, while four signal features are inactive, including \(x_1\), which has the strongest signal by construction. This highlights that a single SHAP run may fail to identify important predictors, whereas the bootstrap-based robust importance more effectively captures stable signal features.

The alternative metrics are substantially less effective. Single-round SHAP identifies only four signal features within the top 12 and assigns zero or near-zero importance to several true signals, including \(x_1\), the strongest signal by construction. Information gain and gain-based importance also show unreliable rankings, placing several true signals far below noise variables. Despite recovering the first seven signal features reasonably well, LIME treats the remaining three signals as inactive. In contrast, RoSHAP identifies the signal features across repeated model fits more consistently.


\paragraph{Computational comparison.}
TreeSHAP has computational complexity approximately \(O(TLD^2)\), where \(T\) is the number of trees, \(L\) is the maximum number of leaves per tree, and \(D\) is the maximum tree depth \citep{lundberg2020local}.  With \(B\) bootstrap repetitions, the total computation is roughly \(B\,O(TLD^2)\), so reducing \(B\) from 1000 to 10 reduces the SHAP computation by about 100 times.
Each bootstrap sample covers about \(1-(1-1/n)^n \approx 63.2\%\) of the original observations. Across \(B\) runs, the probability that an observation is selected at least once is approximately \(1-0.368^B\), which is nearly 1 when \(B=10\). Thus, fewer bootstrap runs can still provide broad data coverage and useful Gaussian-based uncertainty summaries with much lower runtime.

\section{Experiments}
\label{sec:example}
Four experiments demonstrate the proposed framework across data types. 
The Golub dataset \citep{golub1999molecular} represents high-dimensional small-sample classification, Musk (Version 2) \citep{chapman1994musk2} represents moderate-sized high-dimensional molecular classification, UJIIndoorLoc \citep{ujiindoorloc_310} represents high-dimensional regression, and CIFAR-10 represents 10-class image classification.

For tabular data, feature-level interpretability is evaluated across neural networks, tree-based models including LightGBM, XGBoost, random forest (RF), and CatBoost, and logistic regression (LR). 
Because the true important features are unknown, the proposed feature-selection results are compared with information gain (IG), model-based gain or LR coefficients, LIME, and SHAP. 
Classification performance is evaluated by accuracy, F1 score, average precision (AP), and AUC-ROC; regression performance is evaluated by root mean squared error (RMSE), mean absolute error (MAE), and mean absolute percentage error (MAPE). 
For each method, the top \(k\) selected features are used to refit the same predictive model with fixed parameters. 
All comparisons use the same training--testing split, cross-validation procedure, and held-out test data, so performance differences reflect feature-subset quality rather than model or data-split differences. 
The main text reports results from one representative model, with other models provided in the appendix.
For image data, the framework is applied at the sample level. 
In the CIFAR-10 experiment, a ViT-base classifier with patch-level attribution is used to identify image regions contributing most to the predicted class.

\subsection{Classification: Golub}
The Golub dataset \citep{golub1999molecular} contains \(72\) samples and \(7{,}129\) gene expression probes. The response of interest is leukemia subtype: Acute Lymphoblastic Leukemia (ALL), including both allB and allT, versus Acute Myeloblastic Leukemia (AML). The dataset contains \(47\) ALL samples and \(25\) AML samples.
This experiment studies gene-level feature attribution using an XGBoost classifier. The goal is to identify not only which genes contribute most strongly to prediction, but also which genes contribute most consistently across repeated model fits. The Golub dataset provides a representative high-dimensional, small-sample setting where interpretation is difficult and attribution instability is common.

\begin{wrapfigure}[13]{r}{0.33\textwidth}
    \vspace{-2em}
    \centering
    \includegraphics[width=\linewidth]{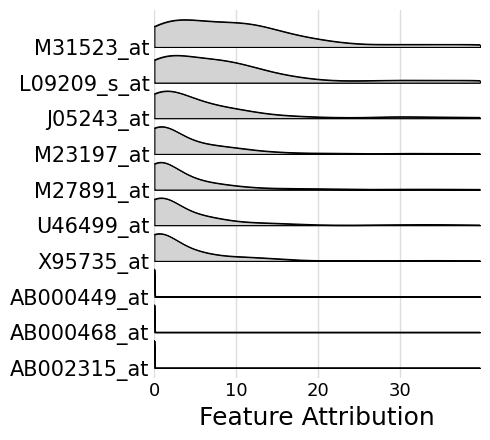}
    \caption{Distribution of top 10 XGBoost feature contribution estimates for the Golub dataset.}
    \label{fig:Golub_XGBoost_top10_dist}
\end{wrapfigure}

Figure~\ref{fig:Golub_XGBoost_top10_dist} shows the distributions of the top 10 genes ranked by RoSHAP metric over 500 bootstrap runs. 
After the eighth-ranked gene, most genes show little or no contribution to cancer prediction.
Compared with the seed-specific SHAP results in Figure~\ref{fig:golub_seed_shap}, the proposed method provides a clearer summary of which genes are consistently important under the fitted XGBoost model.


In this setting, a small number of observations can dominate the variance of, so the Lyapunov condition fails to hold. The distributions are therefore estimated nonparametrically using KDE.

Figure~\ref{fig:golub_XGBoost_comparison} compares feature-selection performance across methods. 
Since experimental validation of candidate genes is often costly, the evaluation focuses on the top 1--15 selected features from each method. 
For each metric, bars show mean performance across candidate feature-set sizes, and error bars show the corresponding variability. 
The proposed method, shown in purple, achieves strong accuracy and F1 score. RoSHAP also maintains comparable AP and AUC-ROC, suggesting stable and reliable feature selection.

\begin{figure}[ht] 
\vspace{-1em}
    \centering
    \includegraphics[width=0.9\linewidth]{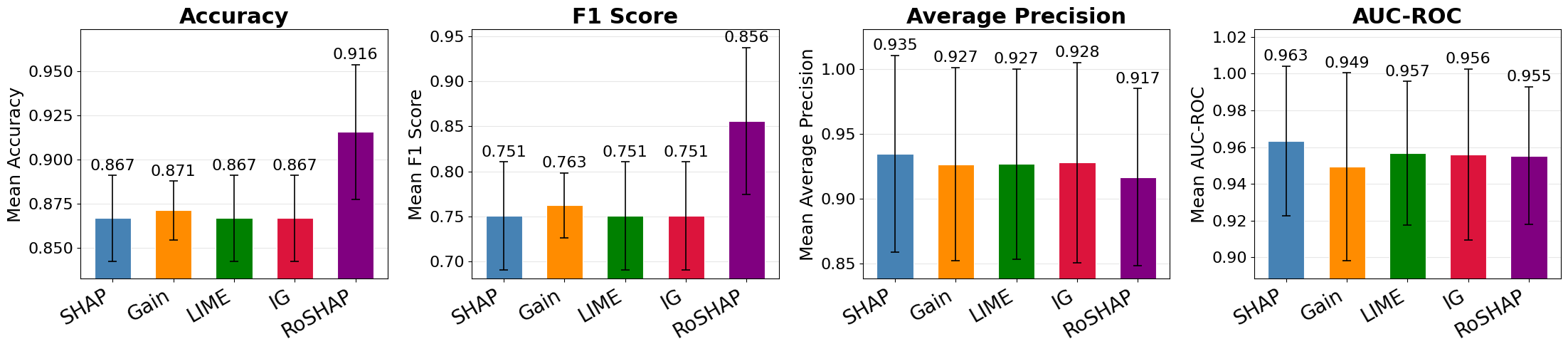}
    \caption{Feature-selection performance on the Golub data using XGBoost. Methods are SHAP (blue), gain (orange), LIME (green), IG (red), and RoSHAP (purple). Bars show mean performance across candidate feature-set sizes; error bars show variation. Higher is better.}
    \label{fig:golub_XGBoost_comparison}
    
\end{figure}
\vspace{-15pt}
\subsection{Classification: Musk (Version 2)}


The Musk (Version 2) dataset \citep{chapman1994musk2} contains 6,598 instances and 166 continuous features. The binary classification task is to distinguish musk from non-musk molecules, where each instance represents one molecular conformation described by physicochemical descriptors. The dataset is imbalanced, with 5,581 non-musk and 1,017 musk instances, making it useful for evaluating feature contribution robustness in high-dimensional molecular classification.

\begin{wrapfigure}[16]{r}{0.33\textwidth}
    \vspace{-2\baselineskip}
    \centering
    \includegraphics[
        width=\linewidth,
    ]{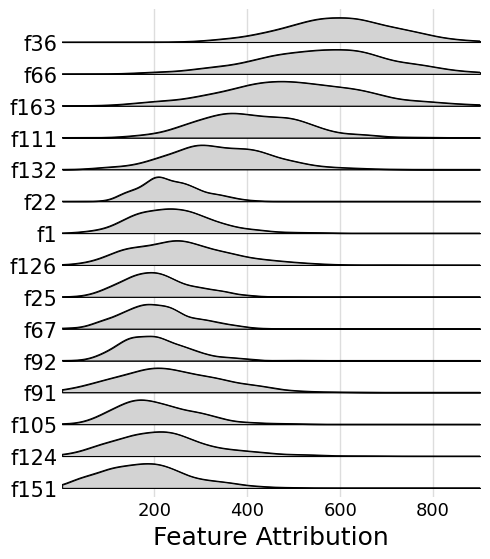}
    \caption{Distribution of top 15 CatBoost feature attribution estimates for the Musk dataset.}
    \label{fig:musk_cb_top15_dist}
\end{wrapfigure}

The Musk experiment studies feature-level attribution using a CatBoost classifier. Figure~\ref{fig:musk_cb_top15_dist} shows the distributions of the top 15 CatBoost feature contribution estimates for the Musk dataset. Several features have similar contribution magnitudes and variability, indicating that the predictive signal is distributed across multiple features rather than dominated by a single feature.

For variable selection, only the top 30 selected features are compared for Musk dataset, since prediction performance remains high beyond this point. Figure~\ref{fig:musk_CatBoost_bar_comparison} shows the feature selection performance for the Musk dataset. Performance across selection metrics is largely comparable. The feature contribution distribution suggests that many features have similar levels of importance. As a result, differences in the selected feature subsets have limited impact on prediction performance.

\begin{figure}[ht]
    \centering
    \includegraphics[width=0.9\linewidth]{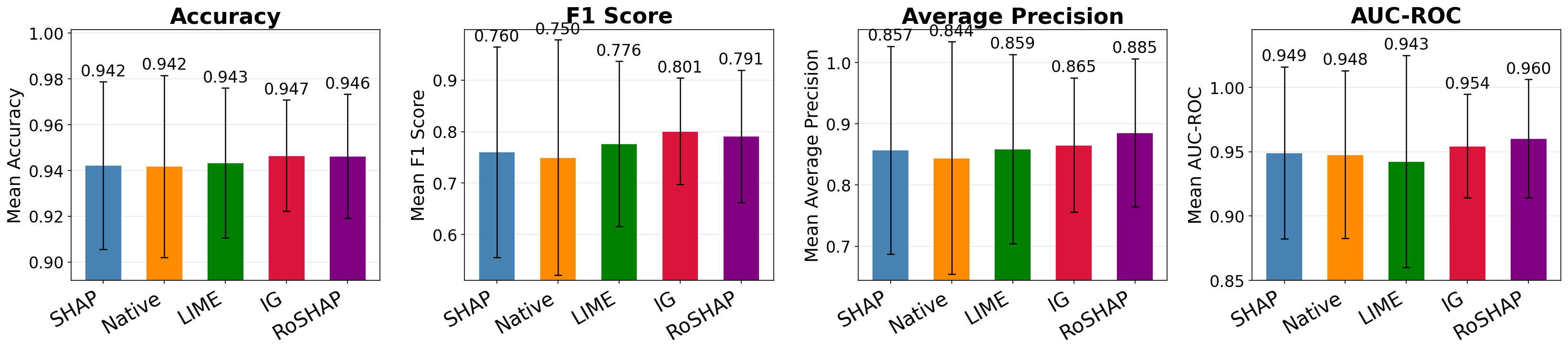}
    \caption{Feature-selection performance on the Musk (Version 2) data using CatBoost. Methods are SHAP (blue), gain (orange), LIME (green), IG (red), and RoSHAP (purple). Bars show mean performance across candidate feature-set sizes; error bars show variation. Higher is better.}
    \label{fig:musk_CatBoost_bar_comparison}
    \vspace{-1\baselineskip}
\end{figure}

\subsection{Regression: UJIIndoorLoc}
The UJIIndoorLoc dataset is used to evaluate the proposed framework in a regression task. It is a public indoor-localization benchmark collected at Universitat Jaume I for evaluating WLAN/WiFi fingerprint-based positioning systems \citep{ujiindoorloc_310}. The dataset contains 21,048 observations, including 19,937 training/reference records and 1,111 validation/test records. Each observation includes received signal strength intensity (RSSI) measurements from 520 wireless access points (WAPs), along with location and metadata variables.

The RSSI values range from approximately \(-104\) to \(0\) dBm, with \(100\) indicating an undetected access point. In this study, the 520 WAP signal-strength variables are used as predictors and longitude is used as the continuous response. The value \(100\) is recoded as \(-110\) to represent a very weak or undetected signal.

\begin{wrapfigure}[16]{r}{0.33\textwidth}
    \vspace{-3em} 
    \centering
    \includegraphics[width=\linewidth]{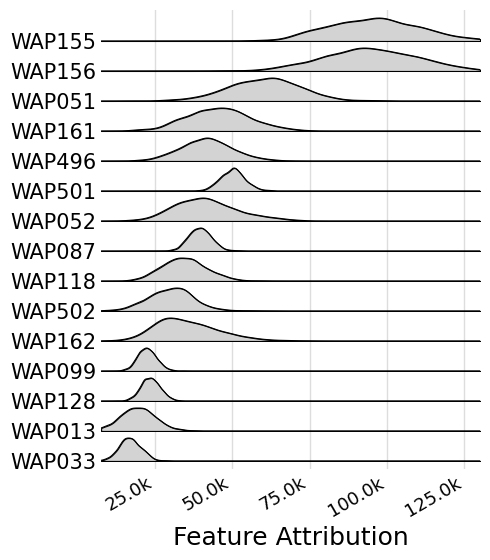}
    \caption{Distribution of top 15 LightGBM feature attribution estimates for the UJIIndoorLoc dataset.}
    \label{fig:uji_LightGBM_top15_dist}
    \vspace{-1em}
\end{wrapfigure}

This experiment studies feature-level attribution using a LightGBM regression model. Figure~\ref{fig:uji_LightGBM_top15_dist} shows the distributions of the top 15 LightGBM feature attribution estimates for the UJIIndoorLoc dataset. The top two features have similar attribution patterns, while the remaining features exhibit different levels of variability: some distributions are more concentrated, whereas others are more dispersed. Figure~\ref{fig:uji_LightGBM_bar_comparison} compares the feature-selection performance of different importance measures on the UJIIndoorLoc regression task using LightGBM. Overall, RoSHAP achieves competitive and stable performance across the evaluation metrics.

\begin{figure}[ht]
    \centering
    \includegraphics[width=0.7\linewidth]{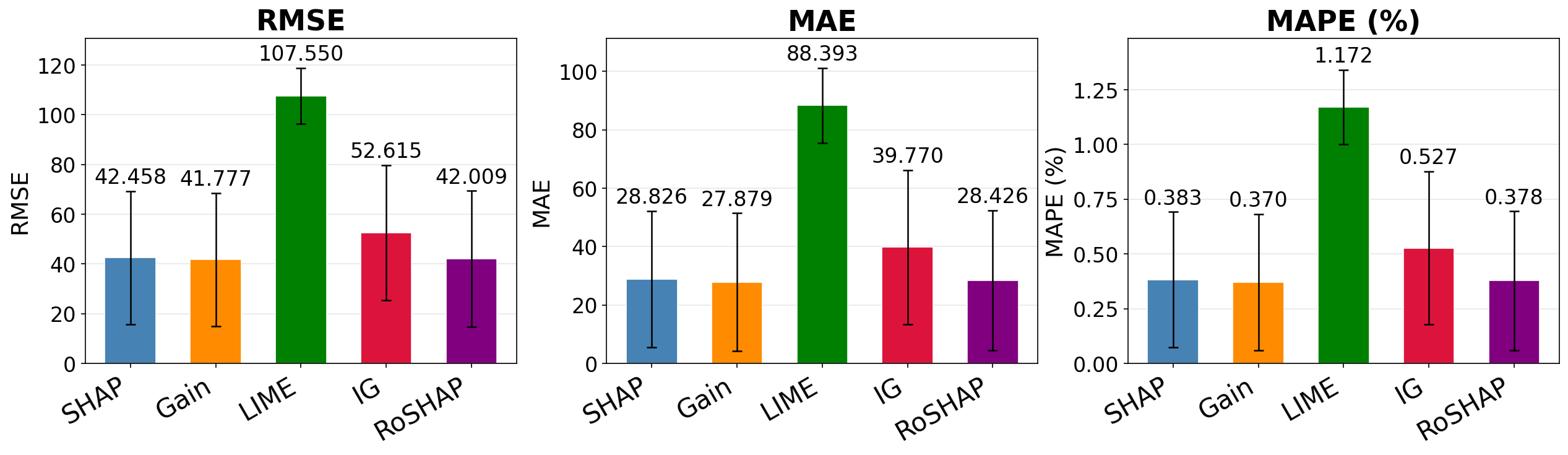}
    \caption{Feature-selection performance on the UJIIndoorLoc data using LightGBM. Methods are SHAP (blue), gain (orange), LIME (green), IG (red), and the RoSHAP (purple). Bars show mean performance across candidate feature-set sizes; error bars show variation. Lower is better.}
    \label{fig:uji_LightGBM_bar_comparison}
    \vspace{-1em}
\end{figure}

\subsection{Classification: CIFAR-10}
This experiment extends the proposed framework beyond tabular data to sample-level image interpretation. The CIFAR-10 dataset contains 60,000 \(32\times 32\) color images from 10 classes, with 50,000 training and 10,000 test images \citep{krizhevsky2009learning}.
We conduct 50 bootstrap-based SHAP analyses using a ViT-base model. 
For each test image, the SHAP array has dimension \(50 \times p \times 10\), where \(p=24\times24\) is the number of image patches and 10 is the number of classes.
Results focus on the SHAP distribution for the predicted class.

As an illustrative example, one ship image that was correctly predicted by most bootstrap-fitted models is selected.
Figure~\ref{fig:ship_9round_shap} shows patch-level SHAP maps from three randomly selected bootstrap runs, where red indicates positive SHAP values, blue negative values, and white values near zero.  
Although the prediction is accurate, the contributing patches vary noticeably across runs, suggesting that single-run image explanations are sensitive to training randomness. 
For example, in Round 1 at For example, the largest SHAP value appears on the water region in Round 1 but shifts toward the ship body in Round 3.  
After aggregation, the robust attribution map in Figure~\ref{fig:cifar_correct} shows that the prediction is mainly attributed to patches on the ship body.
\begin{wrapfigure}[8]{r}{0.35\textwidth}
\vspace{-1em}
    \centering
    \includegraphics[width=\linewidth]{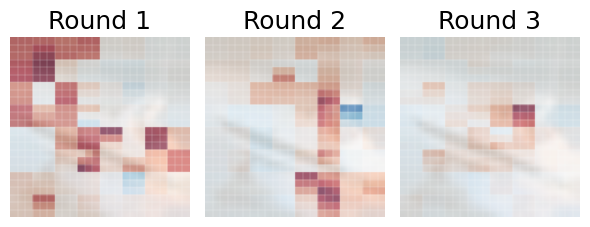}
    \caption{CIFAR-10 ship example using 3 bootstrap rounds.}
    \label{fig:ship_9round_shap}
\end{wrapfigure}

The proposed framework provides additional insight into attribution patterns behind image predictions. Figure~\ref{fig:cifar_explanation_examples} illustrates two representative cases: a correct prediction with attribution focused on the target object, and an incorrect prediction in which attribution is concentrated on irrelevant regions. From left to right, each row shows the original image, the median SHAP map for the predicted class, the standard deviation of SHAP values across bootstrap runs, and the robust SHAP map. In the median SHAP map, red indicates positive contribution, blue negative contribution, and white near-zero contribution; in the standard deviation map, red indicates larger variation and white little variation; in the robust SHAP map, green indicates larger contribution and white no contribution. In Figure~\ref{fig:cifar_correct}, the model focuses mainly on the correct object region. In Figure~\ref{fig:cifar_wrong_place}, however, the model focuses largely on the sky rather than the car, leading to an incorrect bird prediction. These examples show that the framework can help diagnose prediction errors by revealing whether the model attends to the appropriate image regions.

\begin{figure}[ht]
    \centering
    \begin{subfigure}{0.7\linewidth}
        \centering
        \includegraphics[width=\linewidth]{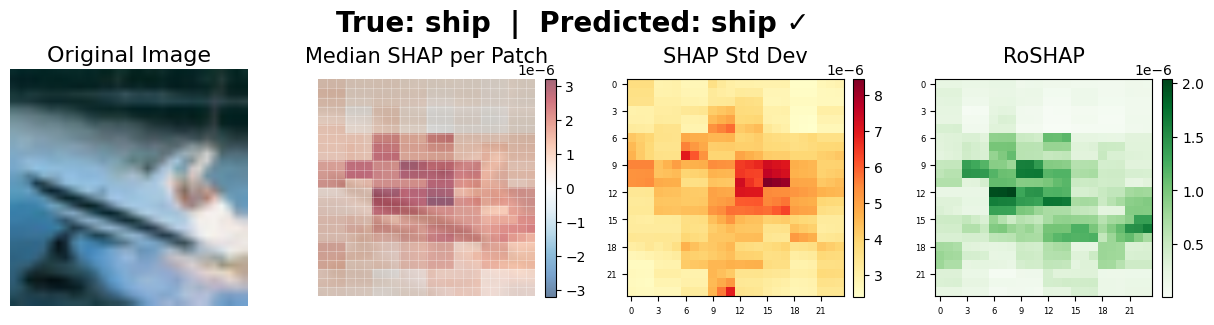}
        \caption{Correct prediction; attribution focuses on the ship.}
        \label{fig:cifar_correct}
    \end{subfigure}

    \begin{subfigure}{0.7\linewidth}
        \centering
        \includegraphics[width=\linewidth]{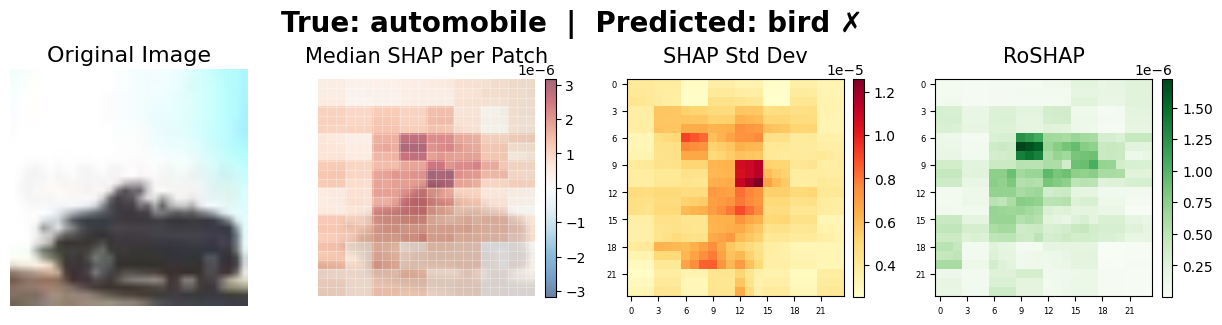}
        \caption{Wrong prediction; attribution does not focus on the car.}
        \label{fig:cifar_wrong_place}
    \end{subfigure}

    \caption{CIFAR-10 explanations after 50 ViT-base bootstrap runs. Columns show the original image, median SHAP map, SHAP standard deviation, and robust SHAP map. Red/blue denote positive/negative SHAP values, red in the standard deviation map denotes larger variability, and green denotes larger robust attribution.}
    \label{fig:cifar_explanation_examples}
\end{figure}

\section{Discussion}
\label{sec:discussion}
This study addresses a key limitation of existing feature attribution methods: attribution scores are often treated as fixed quantities despite their stochastic nature.
We propose a distributional framework for characterizing attribution uncertainty and introduce RoSHAP, a robust SHAP-based metric for stable feature ranking and contribution evaluation.
RoSHAP prioritizes features that are consistently active, strong, and stable across repeated analyses.
We extensively test our model on realistic datasets and demonstrate that the proposed robust metric improves interpretation stability, helps distinguish consistent signals from chance-selected features, and can be applied across different data types. 
Although the bootstrap procedure remains more costly than single-run attribution, the Gaussian approximation helps extensively reduce this burden when applicable. 
Future work can extend the framework from data-sampling randomness to model-level randomness.
Overall, RoSHAP serves as a useful complement to existing feature attribution methods by supporting more stable and reliable interpretation.




\clearpage
{\small
\bibliographystyle{unsrtnat}
\bibliography{references}
}


\appendix

\section{Appendix / supplemental material}


\subsection{Proof of Theorem}
\label{app:proof_clt}

\begin{lemma}[Lyapunov central limit theorem {\cite[Theorem~27.3]{billingsley1995probability}}]
Let \(Z_{n1},\dots,Z_{nn}\) be independent random variables with
\(E[Z_{ni}]=0\), variances \(\sigma_{ni}^2\), and
\(s_n^2=\sum_{i=1}^n \sigma_{ni}^2\). If, for some \(\delta>0\),
\[
\frac{1}{s_n^{2+\delta}}
\sum_{i=1}^n E|Z_{ni}|^{2+\delta}\to 0,
\]
then
\[
\frac{\sum_{i=1}^n Z_{ni}}{s_n}\xrightarrow{d}N(0,1).
\]
\end{lemma}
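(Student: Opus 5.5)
We prove the lemma by reducing the Lyapunov condition to the Lindeberg condition and then running the usual characteristic-function argument for triangular arrays. Throughout, normalize by setting \(X_{ni}=Z_{ni}/s_n\). The \(X_{ni}\) are independent, have mean zero, and satisfy \(\sum_i \operatorname{Var}(X_{ni})=1\). The Lyapunov hypothesis then reads \(L_n:=\sum_i E|X_{ni}|^{2+\delta}\to 0\).

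The first step is to show that this hypothesis implies the Lindeberg condition. Fix \(\varepsilon>0\). On the event \(\{|X_{ni}|\ge\varepsilon\}\) we have \(X_{ni}^2\le |X_{ni}|^{2+\delta}/\varepsilon^{\delta}\). Summing over \(i\) gives
\[
\sum_{i=1}^n E\bigl[X_{ni}^2\,\mathbf 1\{|X_{ni}|\ge\varepsilon\}\bigr]\le \varepsilon^{-\delta}L_n\to 0 .
\]
A consequence we will need is uniform asymptotic negligibility of the variances. Indeed, \(\max_i \sigma_{ni}^2/s_n^2\le \varepsilon^2+\sum_i E[X_{ni}^2\mathbf 1\{|X_{ni}|\ge\varepsilon\}]\), so \(\max_i\sigma_{ni}^2/s_n^2\to0\).

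Next, by Lévy's continuity theorem, it suffices to show that \(\prod_i \varphi_{ni}(t)\to e^{-t^2/2}\) for each fixed \(t\), where \(\varphi_{ni}\) is the characteristic function of \(X_{ni}\). We compare \(\prod_i\varphi_{ni}(t)\) with \(\prod_i(1-t^2\sigma_{ni}^2/(2s_n^2))\). All factors have modulus at most one, so \(|\prod a_i-\prod b_i|\le\sum|a_i-b_i|\). The Taylor bound \(|e^{ix}-1-ix+x^2/2|\le\min(|x|^3/6,\,x^2)\), applied on and off the event \(\{|X_{ni}|<\varepsilon\}\), gives
\[
\sum_i\Bigl|\varphi_{ni}(t)-1+\tfrac{t^2\sigma_{ni}^2}{2s_n^2}\Bigr|\le \tfrac{|t|^3\varepsilon}{6}+t^2\sum_iE\bigl[X_{ni}^2\mathbf 1\{|X_{ni}|\ge\varepsilon\}\bigr].
\]
By the first step, the right-hand side tends to \(|t|^3\varepsilon/6\), which can be made arbitrarily small.

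Finally, we compare \(\prod_i(1-t^2\sigma_{ni}^2/(2s_n^2))\) with \(\prod_i e^{-t^2\sigma_{ni}^2/(2s_n^2)}=e^{-t^2/2}\). We use \(|e^{-u}-1+u|\le u^2\) for small \(u\ge0\). The total error is then at most \(t^4\sum_i\sigma_{ni}^4/s_n^4\le t^4\max_i(\sigma_{ni}^2/s_n^2)\), which tends to zero by the negligibility shown in the first step.

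The main obstacle is the error bookkeeping in the characteristic-function comparison: the Taylor bound has to be split on \(\{|X_{ni}|<\varepsilon\}\) versus its complement to balance the two regimes. In the Lyapunov setting this can be shortened. One applies the bound \(\min(|x|^3,x^2)\le |x|^{2+\delta}\), valid for \(0<\delta\le1\), directly, which gives error \(O(L_n)\) with no truncation. For \(\delta>1\), one first reduces to \(\delta=1\) by Lyapunov's inequality.

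Since the lemma is a standard result, an acceptable alternative is to cite \cite[Theorem~27.3]{billingsley1995probability} verbatim. The paper's theorem then follows by applying the lemma with \(Z_{ni}=|T_{ij}|-E|T_{ij}|\), where independence is supplied by the Assumption.
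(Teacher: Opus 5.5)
Your argument is correct. It is the standard proof behind the cited Billingsley Theorem~27.3: Lyapunov \(\Rightarrow\) Lindeberg via \(X^2\mathbf{1}\{|X|\ge\varepsilon\}\le\varepsilon^{-\delta}|X|^{2+\delta}\), followed by the characteristic-function comparison, and that citation is all the paper relies on, since it states the lemma without proving it. Two small precisions: \(|1-t^2\sigma_{ni}^2/(2s_n^2)|\le 1\) holds only for \(n\) large (which your negligibility step supplies), and in your optional \(\delta>1\) shortcut the reduction to \(\delta=1\) needs the H\"older interpolation bound \(\sum_i E|X_{ni}|^3\le(\sum_i EX_{ni}^2)^{1-1/\delta}L_n^{1/\delta}=L_n^{1/\delta}\), because term-by-term moment monotonicity only gives \(\sum_i(E|X_{ni}|^{2+\delta})^{3/(2+\delta)}\), which need not tend to zero.
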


\begin{proof}
Since
\[
|T_{ij}|=(1-B_{ij})H_{ij},
\]
the independence of \(B_{ij}\) and \(H_{ij}\) gives
\[
E(|T_{ij}|)
=E(1-B_{ij})E(H_{ij})
=(1-w_{ij})E(H_{ij}).
\]
Therefore,
\[
\mu_j=E(U_j)
=\sum_{i=1}^n E(|T_{ij}|)
=\sum_{i=1}^n (1-w_{ij})E(H_{ij}).
\]

Next, by the law of total variance,
\[
\operatorname{Var}(|T_{ij}|)
=
E(\operatorname{Var}(|T_{ij}|\mid B_{ij}))
+
\operatorname{Var}(E(|T_{ij}|\mid B_{ij})).
\]
Conditionally on \(B_{ij}\),
\[
E(|T_{ij}|\mid B_{ij})=(1-B_{ij})E(H_{ij}),
\qquad
\operatorname{Var}(|T_{ij}|\mid B_{ij})
=(1-B_{ij})\operatorname{Var}(H_{ij}).
\]
Hence
\[
\operatorname{Var}(|T_{ij}|)
=
(1-w_{ij})\operatorname{Var}(H_{ij})
+
w_{ij}(1-w_{ij})\{E(H_{ij})\}^2.
\]
Since the pairs \((B_{ij},H_{ij})\) are mutually independent, the variables
\(|T_{ij}|\) are mutually independent, and therefore
\[
s_j^2
=
\operatorname{Var}(U_j)
=
\sum_{i=1}^n \operatorname{Var}(|T_{ij}|)
=
\sum_{i=1}^n (1-w_{ij})\operatorname{Var}(H_{ij})
+
\sum_{i=1}^n w_{ij}(1-w_{ij})\{E(H_{ij})\}^2.
\]

Now define the centered summands
\[
Z_{ni}:=|T_{ij}|-E(|T_{ij}|).
\]
By the stated Lyapunov condition with \(\delta>0\),
\[
\frac{1}{s_j^{2+\delta}}
\sum_{i=1}^n
E\left|
|T_{ij}|-E(|T_{ij}|)
\right|^{2+\delta}
\to 0.
\]
Thus Lyapunov's central limit theorem applies, giving
\[
\frac{\sum_{i=1}^n \{|T_{ij}|-E(|T_{ij}|)\}}{s_j}
\xrightarrow{d}N(0,1).
\]
Since \(U_j=\sum_{i=1}^n |T_{ij}|\) and
\(\mu_j=\sum_{i=1}^n E(|T_{ij}|)\), this is exactly
\[
\frac{U_j-\mu_j}{s_j}
\xrightarrow{d}N(0,1).
\]
\end{proof}

\begin{remark}[On the Lyapunov condition]
A natural choice is \(\delta=1\), which reduces the condition to
\[
\frac{1}{s_j^{3}}
\sum_{i=1}^n
E\bigl|
|T_{ij}|-E(|T_{ij}|)
\bigr|^{3}
\to 0.
\]

A simple sufficient condition is that the third moments are uniformly bounded,
\(\sup_i E|T_{ij}|^3 < \infty\), and no single term dominates the variance, i.e.,
\[
\max_{1\le i\le n} \operatorname{Var}(|T_{ij}|) = o(s_j^2).
\]
Under these conditions, the numerator grows at most linearly in \(n\), while
\(s_j^2\) grows proportionally to the number of contributing terms, yielding
\[
\frac{n}{s_j^3}\to 0,
\]
and hence the Lyapunov condition holds.
\end{remark}

\subsection{Additional Golub Data Results}
\label{app:golub}

We employed the same six classifiers as in the previous experiments, with hyperparameters re-tuned to suit the characteristics of the Golub dataset. The  tree-based ensemble models were configured as follows: LightGBM (binary objective, binary logloss metric, learning rate 0.1, max depth 8, 31 leaves, minimum 15 samples per leaf, 80\% feature and bagging fractions with bagging frequency of 1, 150 boosting rounds); XGBoost (binary logistic objective, logloss evaluation metric, learning rate 0.1, max depth 6, 80\% subsample and 60\% column sampling, 100 boosting rounds); and CatBoost (logloss metric, learning rate 0.1, depth 6, default iterations); Random Forest used 100 trees with max depth 6, while Gradient Boosting used 200 trees with max depth 3 at the default learning rate. As a linear baseline, Logistic Regression was trained with a maximum of 200 iterations using default regularization. All models were initialized with a fixed random seed to ensure reproducibility, and feature contribution was extracted using tree-based explainers for the ensemble models and a linear (coefficient-based) explainer for Logistic Regression. The XGBoost results are presented in the main text, while the feature selection performance for the remaining classifiers on the Golub dataset is shown in Figures~\ref{fig:golub_CatBoost_comparison}, \ref{fig:golub_GradientBoosting_comparison}, \ref{fig:golub_LightGBM_comparison}, \ref{fig:golub_LogisticRegression_comparison}, and \ref{fig:golub_RandomForest_comparison}.

\begin{figure}[H]
    \centering
    \includegraphics[width=0.9\linewidth]{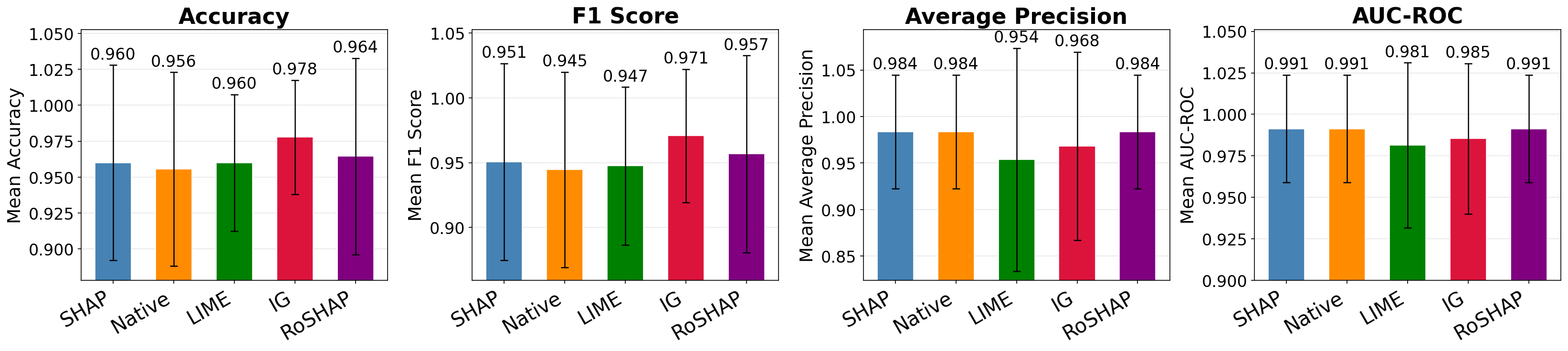}
    \caption{Golub gene selection performance comparison using CatBoost. Feature selection methods include SHAP (blue), gain (orange), LIME (green), information gain (IG; red), and the proposed robust method (purple). Higher better. }
    \label{fig:golub_CatBoost_comparison}
\end{figure}

\vspace{-1em}
\begin{figure}[H]
    \centering
    \includegraphics[width=0.9\linewidth]{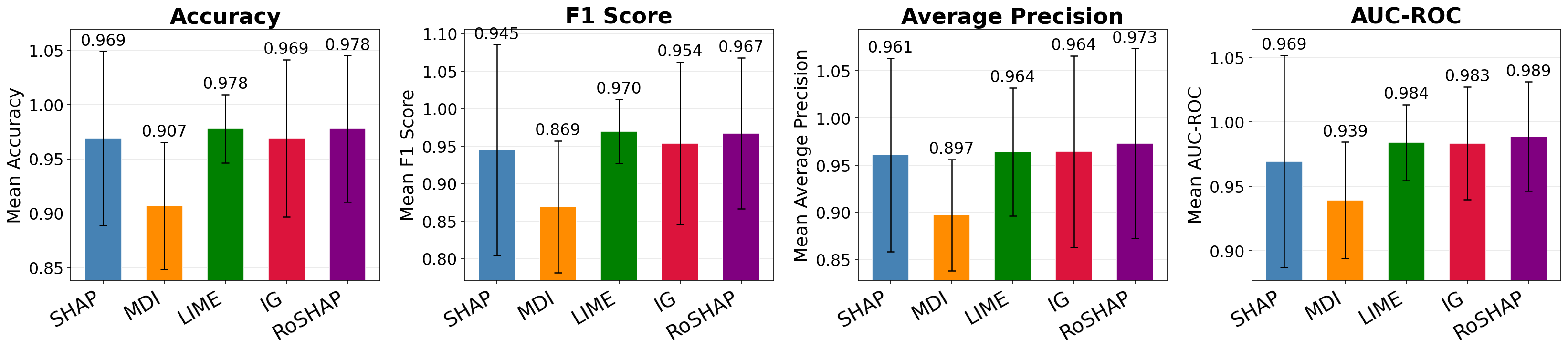}
    \caption{Golub gene selection performance comparison using Gradient Boosting. Feature selection methods include SHAP (blue), gain (orange), LIME (green), information gain (IG; red), and the proposed robust method (purple). Higher better. }
    \label{fig:golub_GradientBoosting_comparison}
\end{figure}

\vspace{-2em}
\begin{figure}[H]
    \centering
    \includegraphics[width=0.9\linewidth]{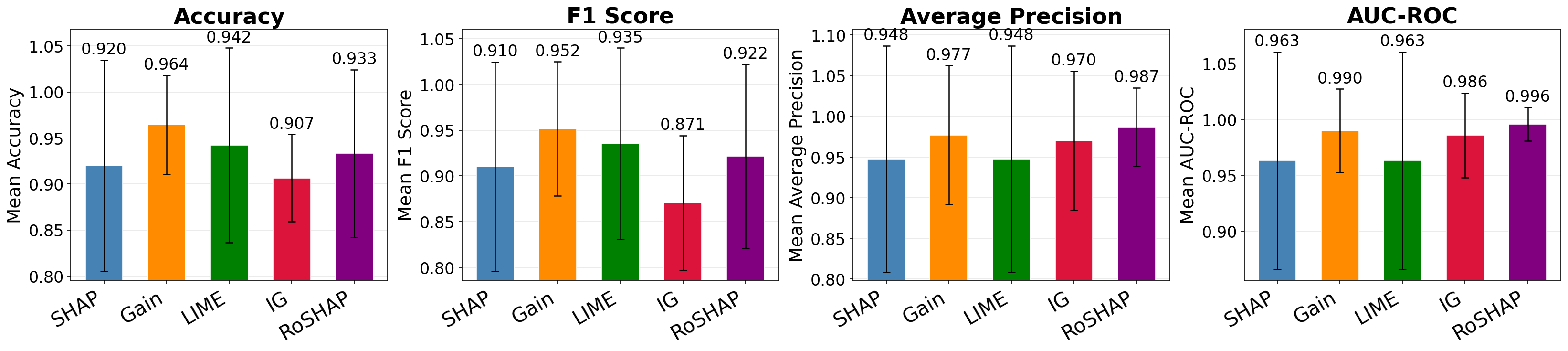}
    \caption{Golub gene selection performance comparison using LightGBM. Feature selection methods include SHAP (blue), gain (orange), LIME (green), information gain (IG; red), and the proposed robust method (purple). Higher better. }
    \label{fig:golub_LightGBM_comparison}
\end{figure}

\vspace{-2em}
\begin{figure}[H]
    \centering
    \includegraphics[width=0.9\linewidth]{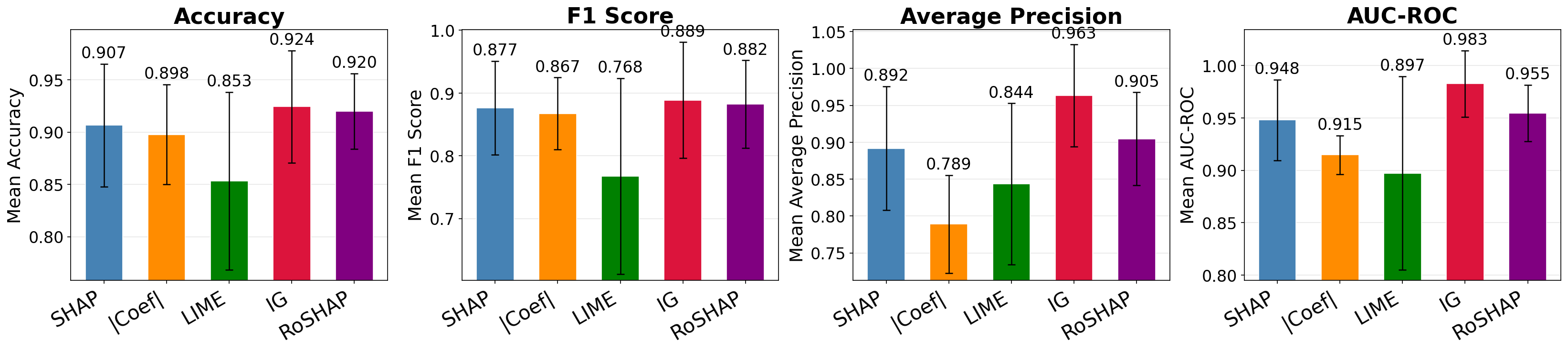}
    \caption{Golub gene selection performance comparison using Logistic Regression. Feature selection methods include SHAP (blue), |coefficient| (orange), LIME (green), information gain (IG; red), and the proposed robust method (purple). Higher better. }
    \label{fig:golub_LogisticRegression_comparison}
\end{figure}

\vspace{-2em}
\begin{figure}[H]
    \centering
    \includegraphics[width=0.9\linewidth]{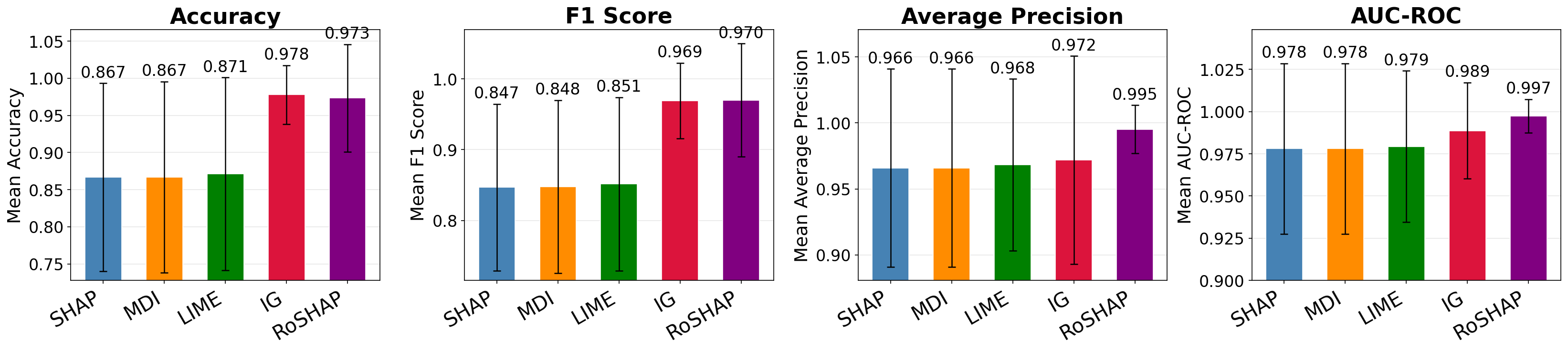}
    \caption{Golub gene selection performance comparison using Random Forest. Feature selection methods include SHAP (blue), gain (orange), LIME (green), information gain (IG; red), and the proposed robust method (purple). Higher better.}
    \label{fig:golub_RandomForest_comparison}
\end{figure}

\subsection{Additional Musk (Version 2) Data Results}
\label{app:musk}
The same six classifiers in the previous experiments were used, with hyperparameters re-tuned to suit the characteristics of the Musk (Version 2) dataset. The  tree-based ensemble models were configured as follows: LightGBM (binary objective, binary logloss metric, learning rate 0.1, max depth 8, 31 leaves, minimum 15 samples per leaf, 80\% feature and bagging fractions with bagging frequency of 1, 150 boosting rounds); XGBoost (binary logistic objective, logloss evaluation metric, learning rate 0.1, max depth 6, 80\% subsample and 60\% column sampling, 100 boosting rounds); and CatBoost (logloss metric, learning rate 0.1, depth 6, default iterations); Random Forest used 100 trees with max depth 6, while Gradient Boosting used 200 trees with max depth 3 at the default learning rate. Logistic Regression was trained with a maximum of 200 iterations using default regularization. All models were initialized with a fixed random seed to ensure reproducibility, and feature contribution was extracted using tree-based explainers for the ensemble models and a linear (coefficient-based) explainer for Logistic Regression. The CatBoost results are presented in the main text, while the feature selection performance for the remaining classifiers on the Musk Version 2 dataset is shown in Figures~\ref{fig:musk_XGBoost_comparison}, \ref{fig:musk_GradientBoosting_comparison}, \ref{fig:musk_LightGBM_comparison}, \ref{fig:musk_LogisticRegression_comparison}, and \ref{fig:musk_RandomForest_comparison}.

\begin{figure}[H]
    \centering
    \includegraphics[width=0.9\linewidth]{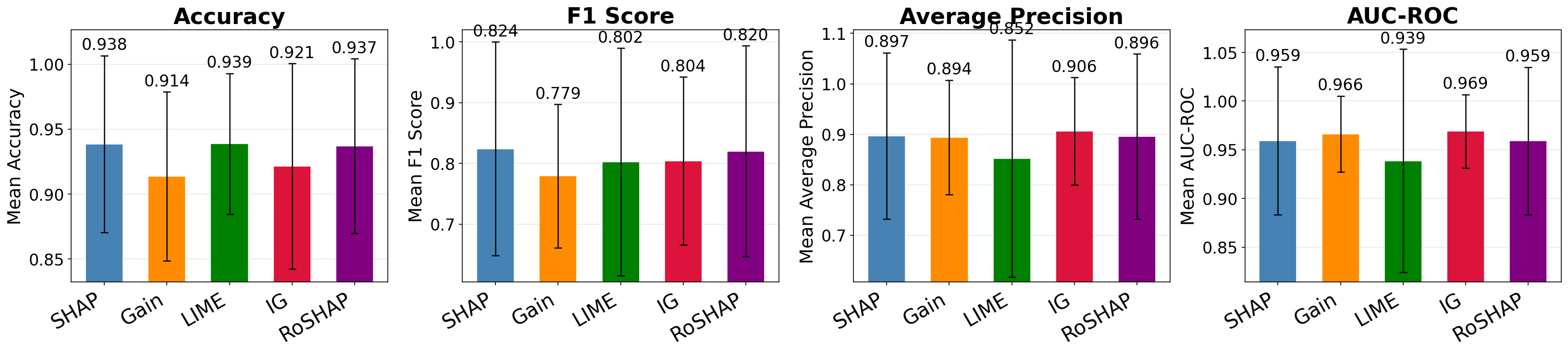}
    \caption{Musk (Version 2) variable selection performance comparison using XGBoost. Feature selection methods include SHAP (blue), gain (orange), LIME (green), information gain (IG; red), and the proposed robust method (purple).}
    \label{fig:musk_XGBoost_comparison}
\end{figure}

\vspace{-2em}
\begin{figure}[H]
    \centering
    \includegraphics[width=0.9\linewidth]{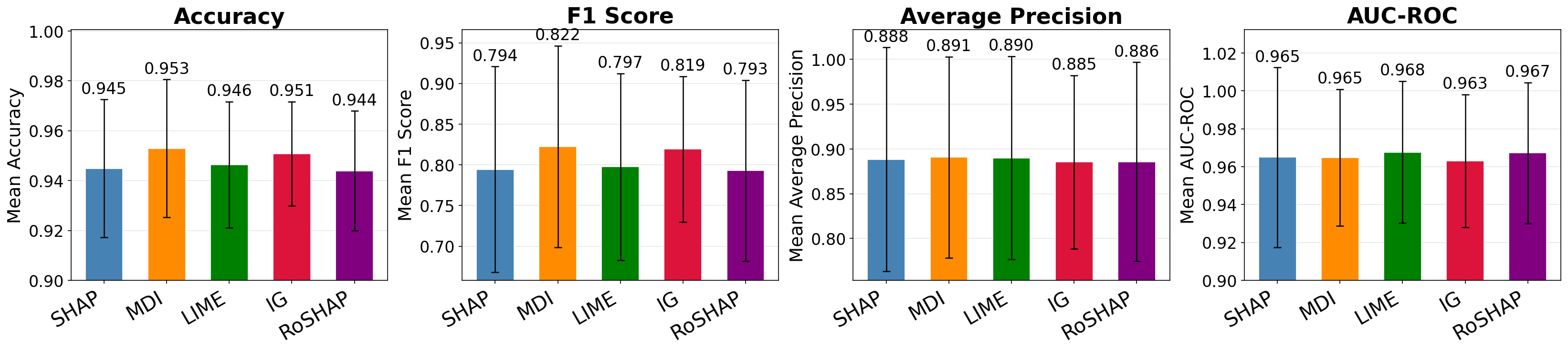}
    \caption{Musk (Version 2) variable selection performance comparison using Gradient Boosting. Feature selection methods include SHAP (blue), gain (orange), LIME (green), information gain (IG; red), and the proposed robust method (purple).}
    \label{fig:musk_GradientBoosting_comparison}
\end{figure}

\vspace{-2em}
\begin{figure}[H]
    \centering
    \includegraphics[width=0.9\linewidth]{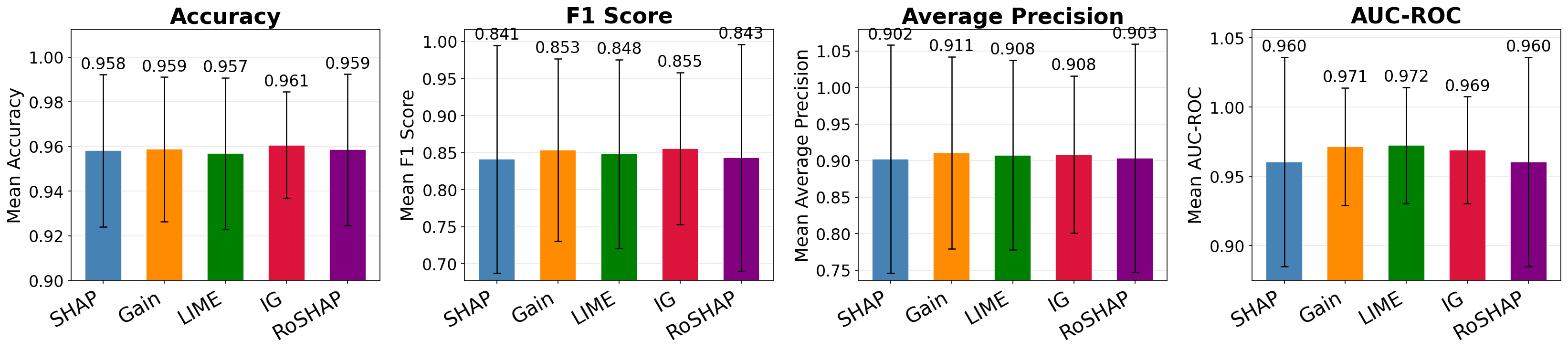}
    \caption{Musk (Version 2) variable selection performance comparison using LightGBM. Feature selection methods include SHAP (blue), gain (orange), LIME (green), information gain (IG; red), and the proposed robust method (purple).}
    \label{fig:musk_LightGBM_comparison}
\end{figure}

\vspace{-2em}
\begin{figure}[H]
    \centering
    \includegraphics[width=0.9\linewidth]{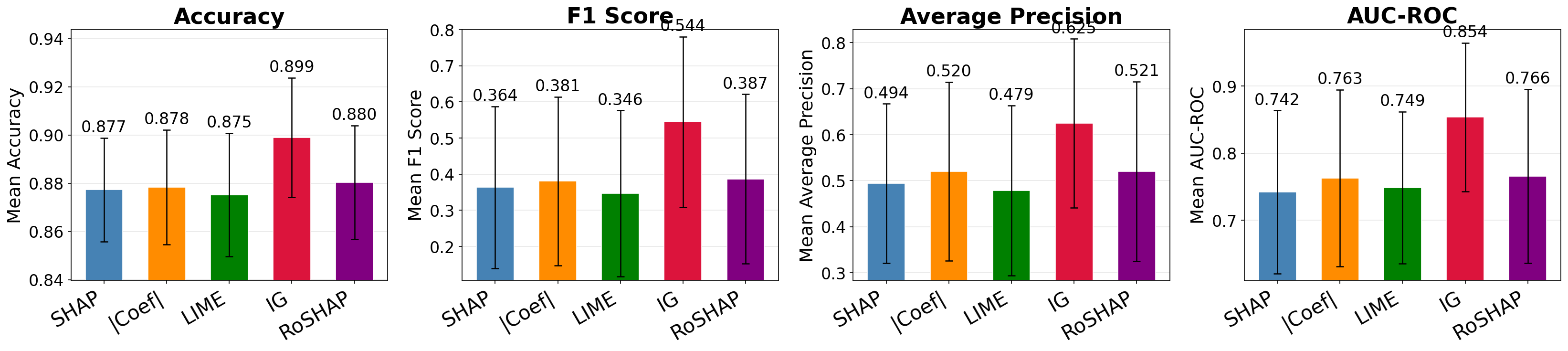}
    \caption{Musk (Version 2) variable selection performance comparison using Logistic Regression. Feature selection methods include SHAP (blue), |coefficient| (orange), LIME (green), information gain (IG; red), and the proposed robust method (purple).}
    \label{fig:musk_LogisticRegression_comparison}
\end{figure}
\vspace{-2em}
\begin{figure}[H]
    \centering
    \includegraphics[width=0.9\linewidth]{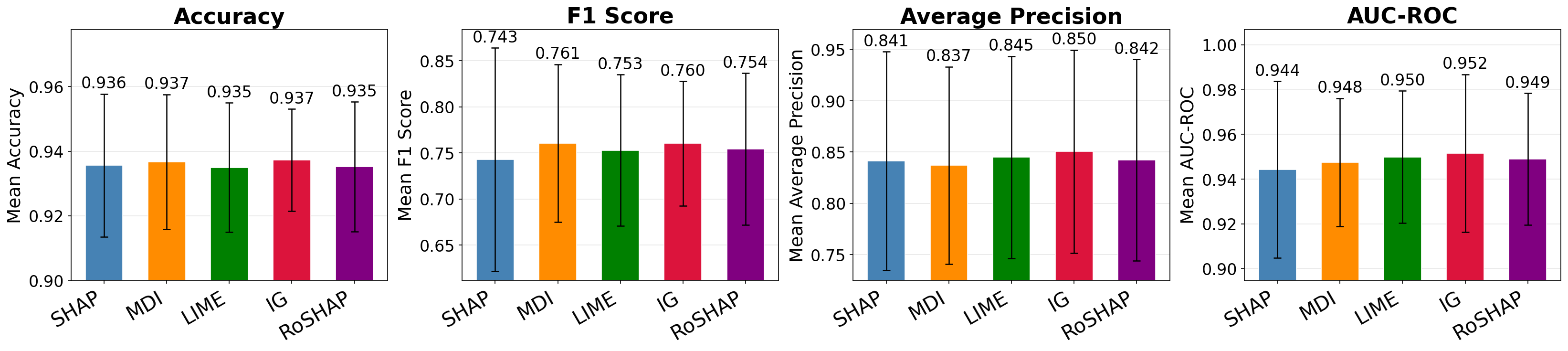}
    \caption{Musk (Version 2) variable selection performance comparison using Random Forest. Feature selection methods include SHAP (blue), gain (orange), LIME (green), information gain (IG; red), and the proposed robust method (purple).}
    \label{fig:musk_RandomForest_comparison}
\end{figure}

\subsection{Additional UJIIndoorLoc Results}
For the UJIIndoorLoc regression experiment, except for the random forest model, same predictive models used in the previous experiments were considered, with hyperparameters re-tuned to suit the characteristics of the UJIIndoorLoc dataset. The response variable was longitude. The tree-based ensemble models were configured as follows: LightGBM used a regression objective with RMSE metric, learning rate 0.03, max depth 7, 63 leaves, minimum 30 samples per leaf, 70\% feature fraction, 80\% bagging fraction with bagging frequency of 1, $\ell_1$ regularization 0.01, $\ell_2$ regularization 1.0, and 300 boosting rounds; XGBoost used a squared-error regression objective with RMSE evaluation metric, learning rate 0.03, max depth 6, 80\% subsample ratio, 70\% column sampling, minimum child weight 10, $\ell_1$ regularization 0.01, $\ell_2$ regularization 2.0, histogram-based tree construction, and 300 boosting rounds; and CatBoost used RMSE loss and evaluation metric, learning rate 0.03, depth 6, 300 iterations, $\ell_2$ leaf regularization 5.0, and 70\% random subspace sampling. Ridge Regression was included as a linear baseline with regularization parameter $\alpha=10$. All models were initialized with a fixed random seed to ensure reproducibility, and feature contribution was extracted using tree-based explainers for the ensemble models and a linear explainer for Ridge Regression.
The LightGBM results are presented in the main text, while the feature selection performance for the remaining models on the UJIIndoorLoc dataset is shown in Figures~\ref{fig:uji_CatBoost_bar_comparison}, \ref{fig:uji_GradientBoosting_bar_comparison}, \ref{fig:uji_XGBoost_bar_comparison}, and \ref{fig:uji_Ridge_bar_comparison}.

\begin{figure}[H]
    \centering
    \includegraphics[width=0.7\linewidth]{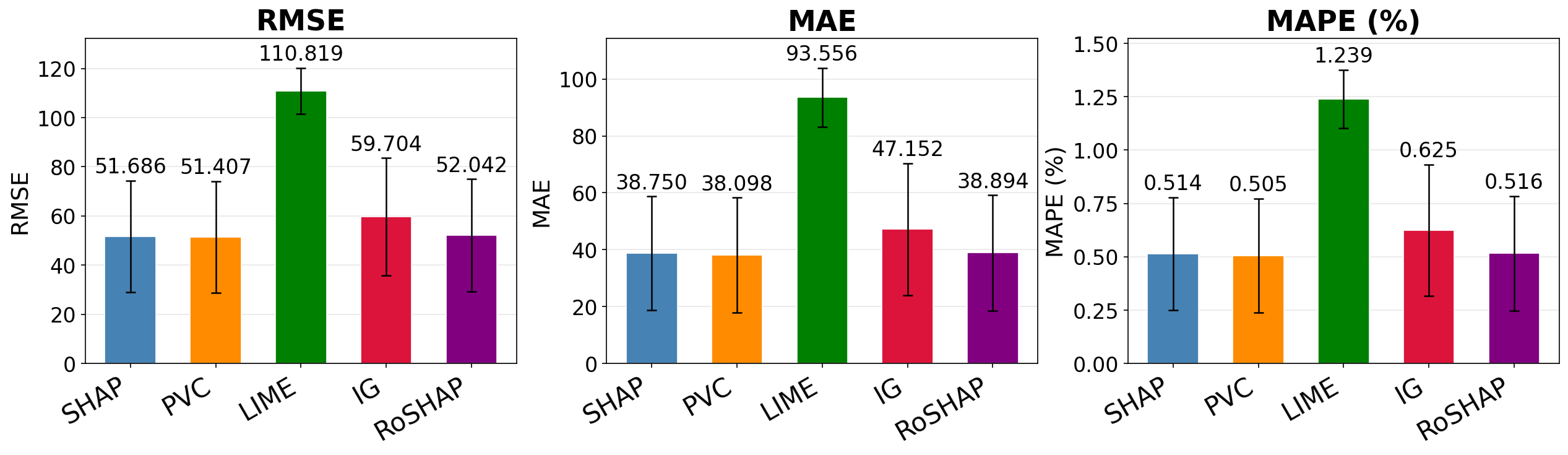}
    \caption{UJIIndoorLoc variable selection performance comparison using CatBoost. Feature selection methods include SHAP (blue), gain (orange), LIME (green), information gain (IG; red), and the proposed robust method (purple). Bar heights represent the mean performance metric across different candidate feature-set sizes, and error bars indicate the corresponding variation. Lower better.}
    \label{fig:uji_CatBoost_bar_comparison}
\end{figure}

\vspace{-2em}
\begin{figure}[H]
    \centering
    \includegraphics[width=0.7\linewidth]{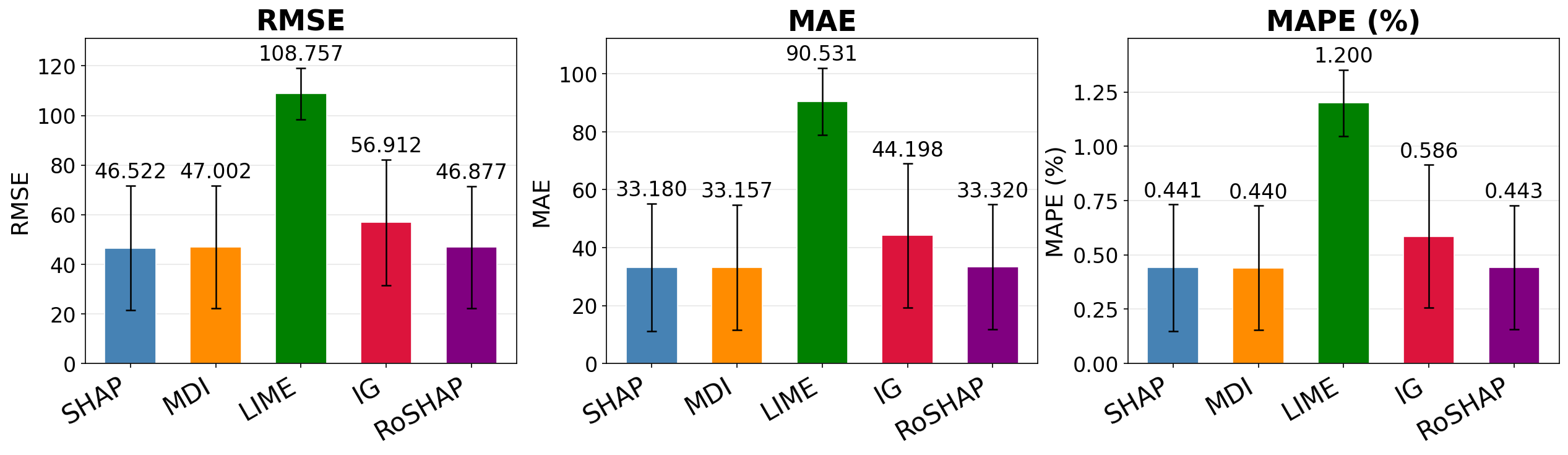}
    \caption{UJIIndoorLoc variable selection performance comparison using Gradient Boosting. Feature selection methods include SHAP (blue), gain (orange), LIME (green), information gain (IG; red), and the proposed robust method (purple). Bar heights represent the mean performance metric across different candidate feature-set sizes, and error bars indicate the corresponding variation. Lower better.}
\label{fig:uji_GradientBoosting_bar_comparison}
\end{figure}

\vspace{-2em}
\begin{figure}[H]
    \centering
    \includegraphics[width=0.7\linewidth]{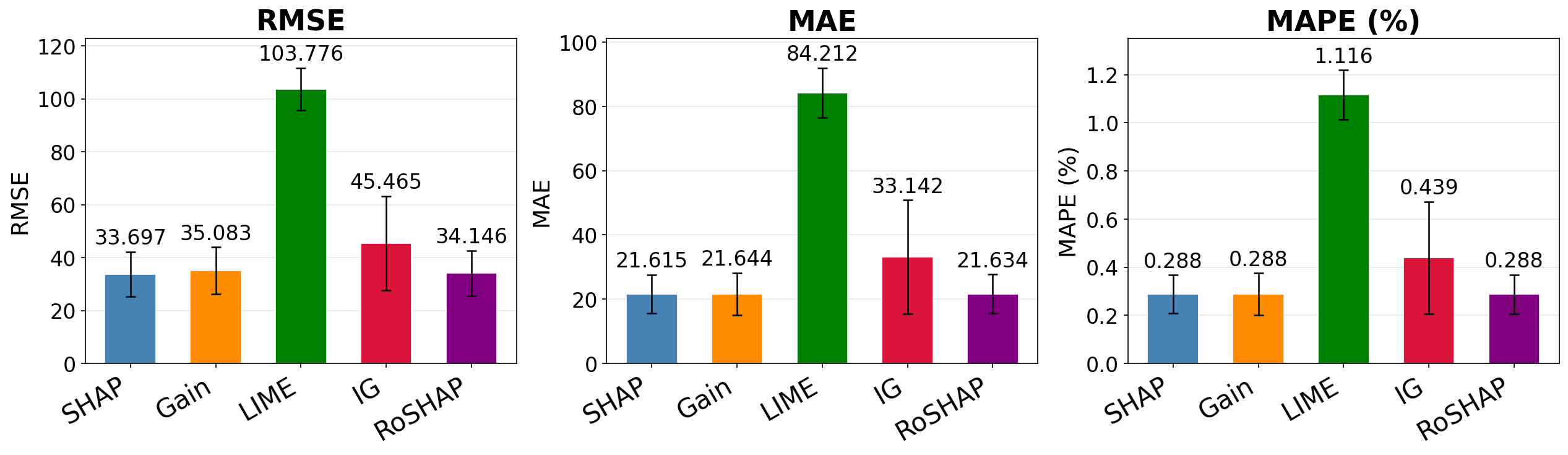}
    \caption{UJIIndoorLoc variable selection performance comparison using XGBoost. Feature selection methods include SHAP (blue), gain (orange), LIME (green), information gain (IG; red), and the proposed robust method (purple). Bar heights represent the mean performance metric across different candidate feature-set sizes, and error bars indicate the corresponding variation. Lower better.}
\label{fig:uji_XGBoost_bar_comparison}
\end{figure}

\vspace{-2em}
\begin{figure}[H]
    \centering
    \includegraphics[width=0.7\linewidth]{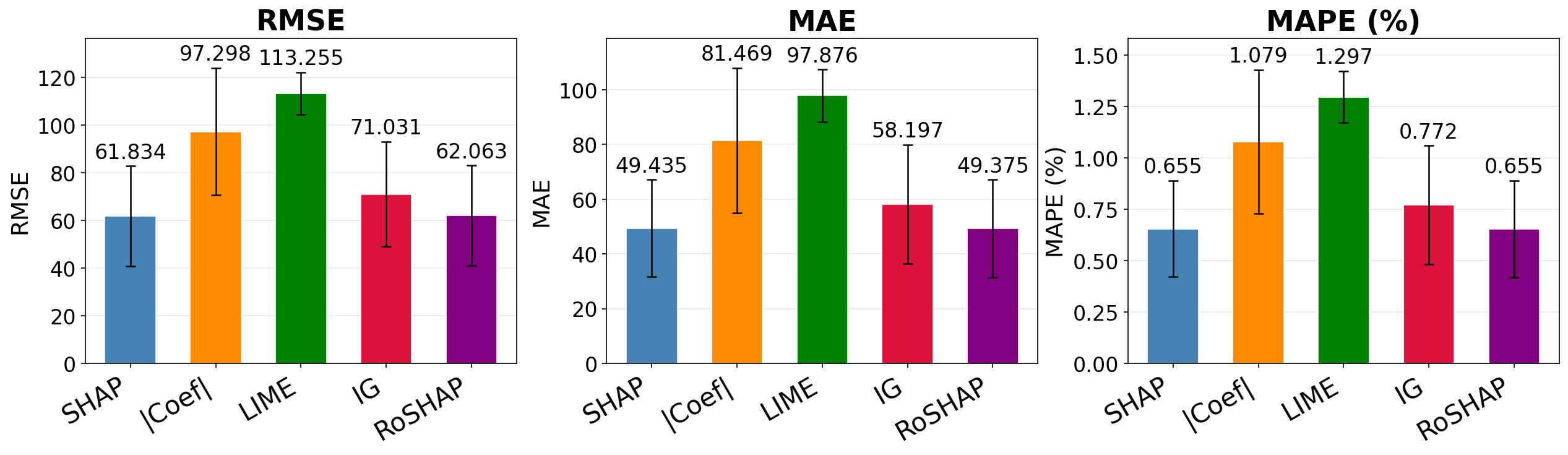}
    \caption{UJIIndoorLoc variable selection performance comparison using Linear Regressio (Ridge). Feature selection methods include SHAP (blue), |coef| (orange), LIME (green), information gain (IG; red), and the proposed robust method (purple). Bar heights represent the mean performance metric across different candidate feature-set sizes, and error bars indicate the corresponding variation. Lower better.}
\label{fig:uji_Ridge_bar_comparison}
\end{figure}

\subsection{Additional CIFAR 10 Data Results}
\label{app:cifar10}

This experiment employ ViT-Base/16, pretrained on ImageNet-21k. The model takes 384×384 inputs divided into 16×16 patches (24×24 grid, 576 patches total), with 12 transformer layers, 12 attention heads, and a hidden dimension of 768 (~86M parameters). The classification head is replaced with a linear layer mapping to 10 CIFAR-10 classes and fine-tuned for 10 epochs using Adam (lr=1e-4, cosine decay) with batch size 128.

\begin{figure}[H]
    \centering
    \includegraphics[width=\linewidth]{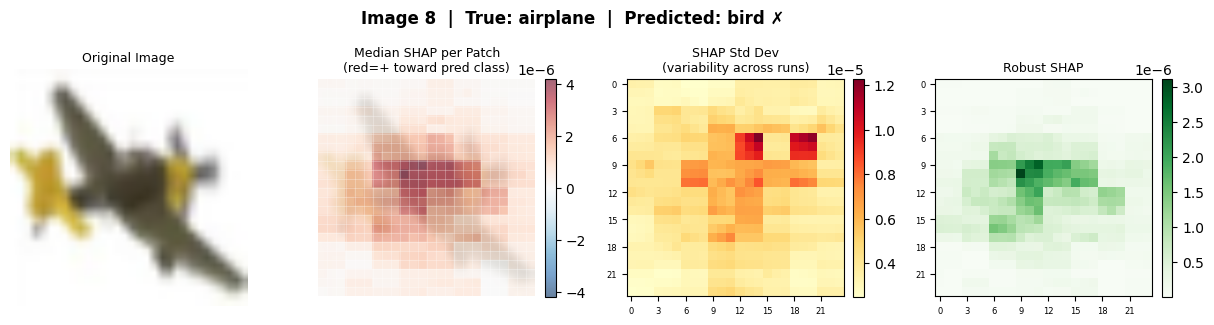}
    \caption{Prediction and bootstrap results for image 8. }
    \label{fig:img_008_explanation}
\end{figure}

\vspace{-2em}
\begin{figure}[H]
    \centering
    \includegraphics[width=\linewidth]{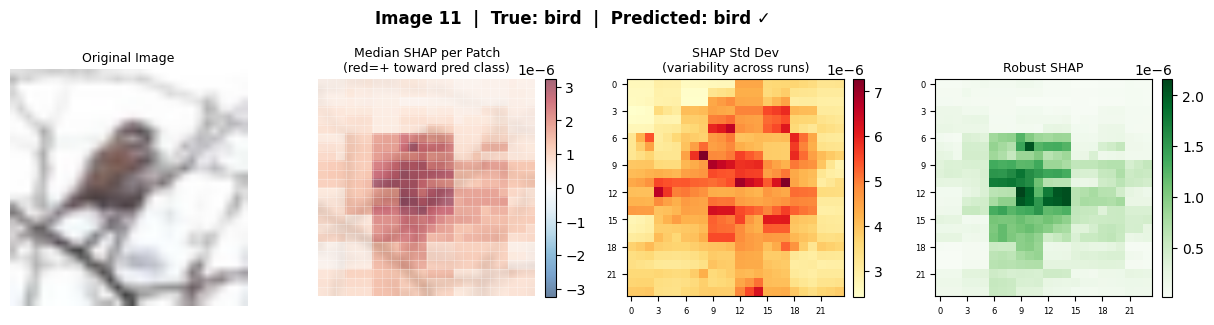}
    \caption{Prediction and bootstrap results for image 11. }
    \label{fig:img_011_explanation}
\end{figure}

\vspace{-2em}
\begin{figure}[H]
    \centering
    \includegraphics[width=\linewidth]{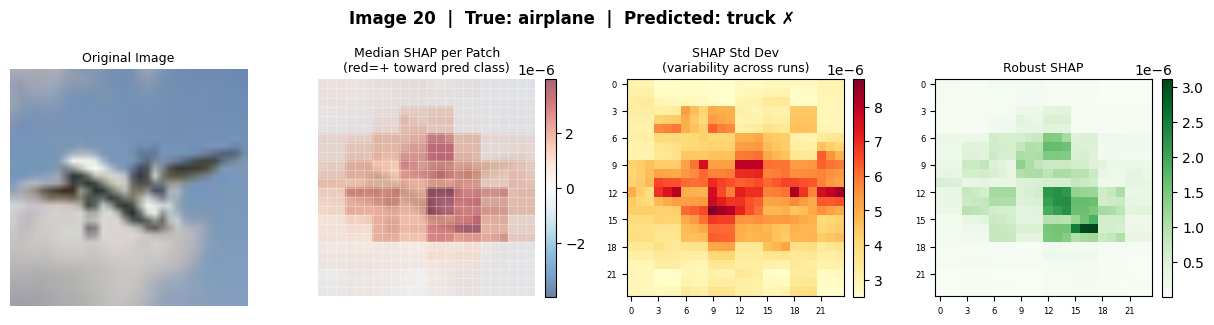}
    \caption{Prediction and bootstrap results for image 20. }
    \label{fig:img_020_explanation}
\end{figure}

\vspace{-2em}
\begin{figure}[H]
    \centering
    \includegraphics[width=\linewidth]{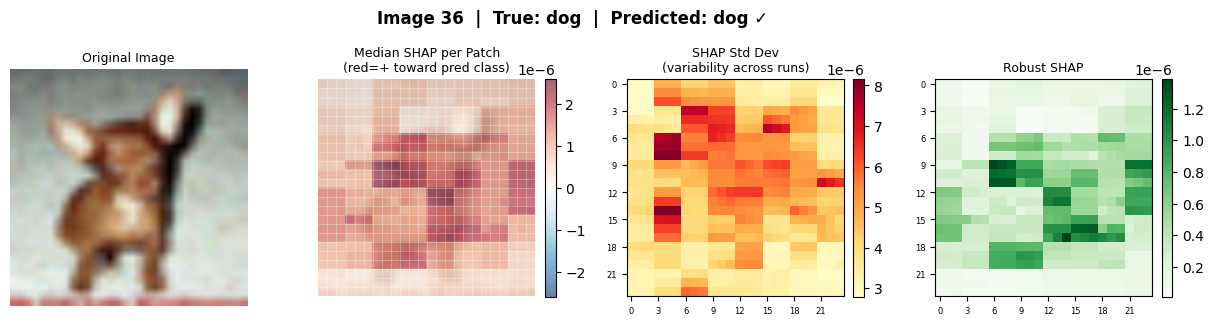}
    \caption{Prediction and bootstrap results for image 36. }
    \label{fig:img_036_explanation}
\end{figure}

\vspace{-2em}
\begin{figure}[H]
    \centering
    \includegraphics[width=\linewidth]{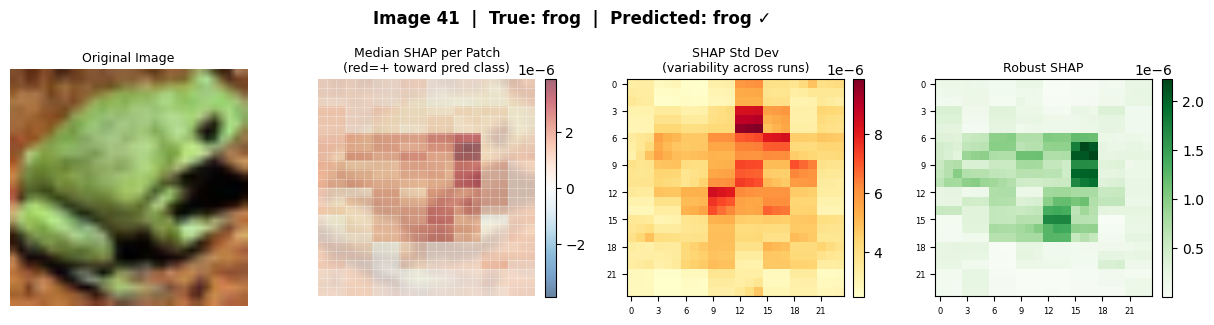}
    \caption{Prediction and bootstrap results for image 41. }
    \label{fig:img_041_explanation}
\end{figure}

\vspace{-2em}
\begin{figure}[H]
    \centering
    \includegraphics[width=\linewidth]{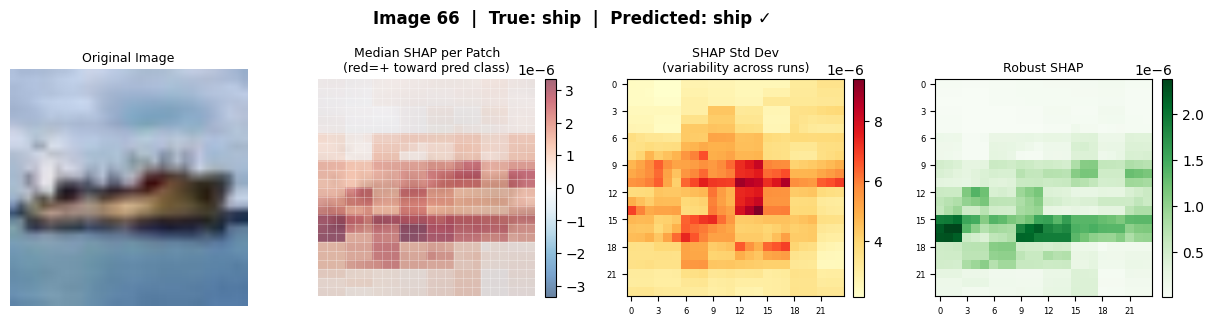}
    \caption{Prediction and bootstrap results for image 66. }
    \label{fig:img_066_explanation}
\end{figure}

\vspace{-2em}
\begin{figure}[H]
    \centering
    \includegraphics[width=\linewidth]{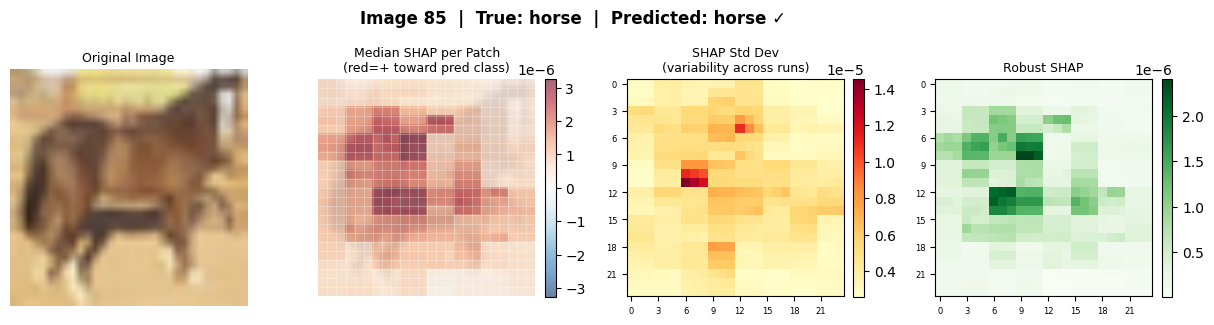}
    \caption{Prediction and bootstrap results for image 85. }
    \label{fig:img_085_explanation}
\end{figure}

\end{document}